\newtheorem{prop}{Proposition}
\newtheorem{thm}{Theorem}
\newtheorem{remark}{Remark}
\title{Data assimilation and discrepancy modeling with\\ shallow recurrent decoders
}
\author{Yuxuan Bao$^*$, J. Nathan Kutz$^{*,\dag}$\\[.1in]$^*$Department of Applied Mathematics, University of Washington, Seattle, WA\\
$^\dag$Department of Electrical and Computer Engineering, University of Washington, Seattle, WA }
\begin{document}

\maketitle

\begin{abstract}
The requirements of modern sensing are rapidly evolving, driven by increasing demands for data efficiency, real-time processing, and deployment under limited sensing coverage.  Complex physical systems are often characterized through the integration of a limited number of point sensors in combination with scientific computations which approximate the dominant, full-state dynamics. Simulation models, however, inevitably neglect small-scale or hidden processes, are sensitive to perturbations, or oversimplify parameter correlations, leading to reconstructions that often diverge from the reality measured by sensors.   This creates a critical need for data assimilation, the process of integrating observational data with predictive simulation models to produce coherent and accurate estimates of the full state of complex physical systems. We propose a machine learning framework for Data Assimilation with a SHallow REcurrent Decoder (DA-SHRED) which bridges the simulation-to-real (SIM2REAL) gap between computational modeling and experimental sensor data. For real-world physics systems modeling high-dimensional spatiotemporal fields, where the full state cannot be directly observed and must be inferred from sparse sensor measurements, we leverage the latent space learned from a reduced simulation model via SHRED, and update these latent variables using real sensor data to accurately reconstruct the full system state. Furthermore, our algorithm incorporates a {\em sparse identification of nonlinear dynamics} (SINDy) based regression model in the latent space to identify functionals corresponding to missing dynamics in the simulation model. We demonstrate that DA-SHRED successfully closes the SIM2REAL gap and additionally recovers missing dynamics in highly complex systems, demonstrating that the combination of efficient temporal encoding and physics-informed correction enables robust data assimilation under sparse sensing constraints.
\end{abstract}

\section{Introduction}

%
Data-driven science and engineering is being revolutionized by advancements in machine learning and AI algorithms~\cite{brunton2022data}.   Leveraging sensor measurements, often in combination with scientific computation proxies, such algorithms aim to learn effective models for a diversity of downstream tasks, including reconstruction, forecasting, and control in challenging environments that include noisy measurements and or parametric variability.    A grand challenge in the deployment of such algorithms is the fact that many physical systems are not amenable to full state measurements, but rather only discrete point sensor measurements at prescribed and limited locations.  In fact, the only knowledge of the full state space is typically approximated by simulations of the underlying governing equations which are often given by {\em partial differential equations} (PDEs).  For example, our knowledge of the full dynamics of nuclear reactors, plasma physics, rocket engines, and many complex flow fields has only been estimated and/or constructed by simulation proxies.  Some nuclear reactors, for instance, are modeled by up to 20 coupled PDEs that detail the complex interactions between the fluid dynamics, thermodynamics, ion concentrations, etc~\cite{riva2024robust}.  None of the 20 fields have been measured in practice in deployed reactors.  Rather, in reality only one or two of the fields (e.g. temperature, pressure) can be measured at discrete point sensor locations on the walls of the reactor.  This presents a significant modeling challenge for closing the {\em simulation-to-reality} (SIM2REAL) gap~\cite{hofer2021sim2real,kadian2020sim2real}, especially as the PDEs we simulate often poorly approximate the real physics of such complex systems.   Using the recently developed {\em SHallow REcurrent Decoder} (SHRED) architecture~\cite{williams2023sensing,tomasetto2025reduced,gao2025sparse}, we demonstrate a data-driven method for (i) updating a SHRED model to reality when trained only on simulations and (ii) additionally learning the missing physics of the simulation proxy.  Our {\em data assimilation SHRED} (DA-SHRED) thus provides an effective algorithm for closing the SIM2REAL gap in many complex systems, as demonstrated in the challenging examples presented here which include applications in rocket engines, chemical reactors and turbulent flows.

\begin{figure}[t]
\centering
\vspace*{-.5in}
\includegraphics[scale=0.55]{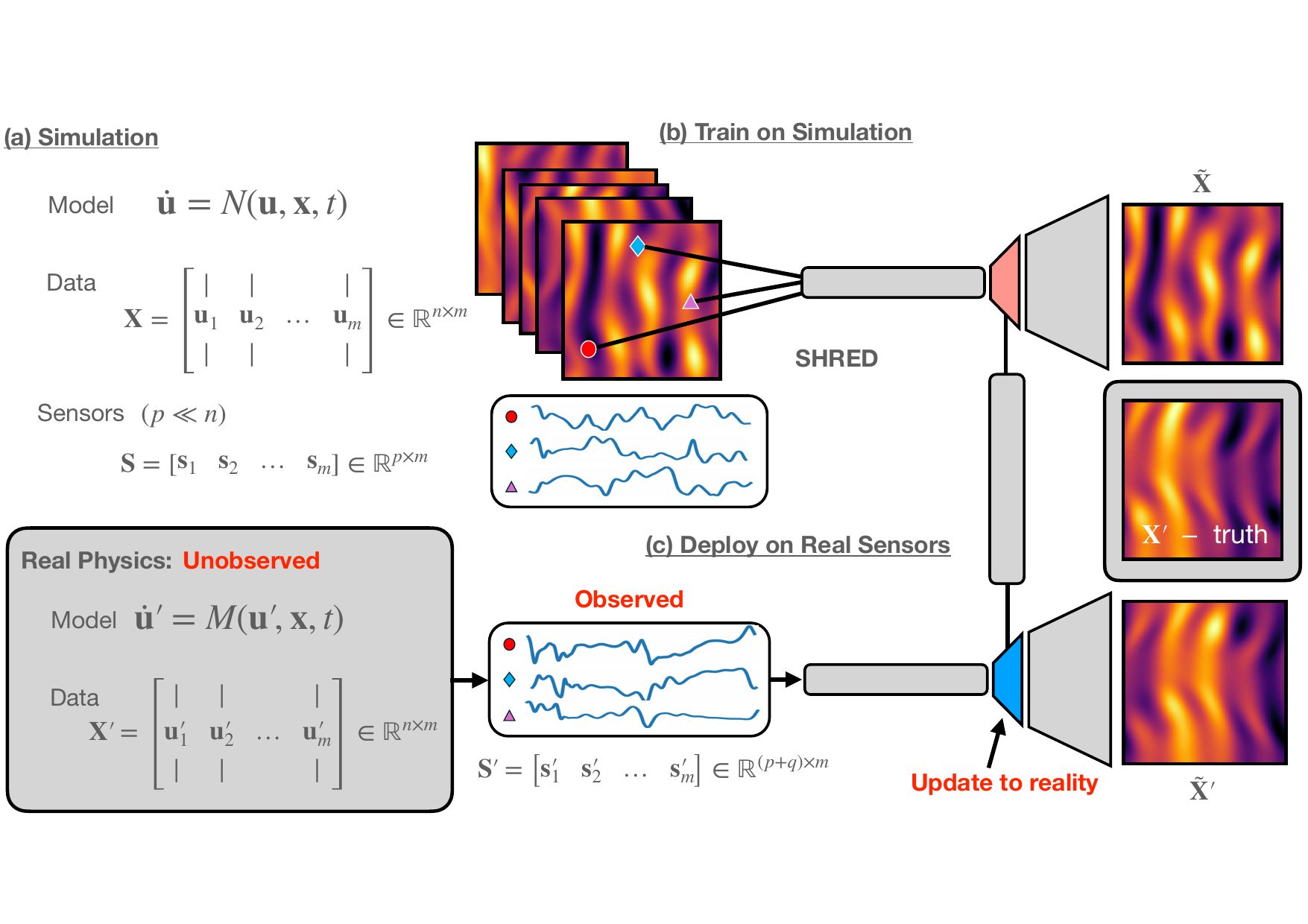}
\vspace*{-.5in}
\caption{The algorithmic structure of the DA-SHRED structure. With real physics unobserved, the model exploits the latent space trained on known simulation data using traditional SHRED network, and deploys it on real sensor data to close the discrepancy.  It is assumed that the full state-space of the real physics is never observed in practice.  The variables, data and models are summarized in the figure table~\ref{fig:table}.
\label{fig:overview}}
\end{figure}

%
Data assimilation has become the leading method for closing the SIM2REAL gap across science and engineering~\cite{law2015data}, with extensive theoretical and computational developments over the past two decades~\cite{evensen2009data, reich2015probabilistic}. Based upon Kalman filtering, data assimilation is potentially one of the most useful and broadly deployed data-driven modeling technique available today as we are rarely without access to some underlying governing equations or without experimental measurements.   By assuming that both the model used and the measurements acquired have known error distributions, ensemble Kalman filtering methods~\cite{evensen2003ensemble,houtekamer2005ensemble} can be used to generate optimal statistical predictions.   Weather forecasting has been revolutionized by data assimilation, with state-of-the-art methods like the 4DVAR architecture~\cite{lorenc2003modelling,fisher2001developments} providing a remarkable improvement in our modern forecasting capabilities.  Data assimilation, however, does not typically use the SIM2REAL mismatch to learn updates to the underlying model.  Thus the SIM2REAL gap often persists.  Discrepancy modeling for dynamic systems~\cite{kaheman2019learning,ebers2024discrepancy,levine2022framework} attempts to address this issue by using the SIM2REAL gap to propose updates to the underlying model in order to correct the physics towards reality.  Discrepancy modeling is thus driven by sensor measurements which are a direct assessment of reality modulo noise or bias in the measurements.  A challenge for both data assimilation and discrepancy modeling is that simulations allow access to all variables at all spatial and temporal points of a computational mesh or grid.  In contrast, discrete temporal measurements of reality are typically only acquired at a limited number of spatial positions, and typically only a subset of the variables are actually observed.  Thus the goal of updating the underlying PDE (governing equations) is exceptionally challenging, with only limited methods proposed thus far.

The DA-SHRED algorithm proposed here, illustrated in Fig.~\ref{fig:overview}, is based upon the SHRED architecture~\cite{williams2023sensing,tomasetto2025reduced,gao2025sparse} which leverages three key mathematical concepts:  (i) the separation of variables, (ii) Takens embedding theorem, and (iii) a decoding only strategy.  Separation of variables is the foundation of many analytic solutions techniques for solving PDEs and it has been deployed as the infrastructure for numerical time stepping methods for PDEs.  Takens' embedding theorem states that the time-history information at a single measurement location can provide a diffeomorphic representation of the entire state-space of a dynamical system.  With modern machine learning, this diffeomorphism can be learned with training data.  Finally, the decoding only strategy avoids the ill-conditioned learning of inverse operators required of standard autoencoder/decoder pairs.  In combination, they provide SHRED with a robust framework for sensing~\cite{williams2023sensing}, model reduction~\cite{tomasetto2025reduced}, and physics discovery~\cite{gao2025sparse}.  The innovations presented here leverage the SHRED architecture to not only close the SIM2REAL gap, but to also learn the missing physics in the original simulations.  SHRED is trained initially on the simulation data, and then updated by the acquisition of temporal sensor data deployed on the real physical system.  This gives an optimization pathway for updating the weights of the DA-SHRED model trained on simulation to minimize the error between the sensors observations and the SHRED predictions at the sensor locations.  In addition to minimizing the SIM2REAL error, the updated DA-SHRED model can then be used to estimate the missing physics terms which are responsible for the error in the SIM2REAL gap.  Specifically, the latent space of the DA-SHRED architecture is embedded with a {\em sparse identification of nonlinear dynamics} (SINDy)~\cite{brunton2016discovering,rudy2017data} loss functional in order to extract the parsimonious missing physics from a library of potential candidate terms.  Thus we propose a 2-stage scheme for both data assimilation and discrepancy modeling.  The success of the method is demonstrated on a number of challenging models that include the 2D Kuramoto-Sivanshinsky equations, the 2D Kolmogorov flow, the 2D Gray-Scott reaction-diffusion equations, and a rotating detonation engine example.  In each case, the DA-SHRED is shown to be effective in closing the SIM2REAL gap and learning the missing physics.  The algorithm is further robust to noise, requires only minimal training data, and can be trained in a compressive framework which allows for efficient laptop level computing.

\begin{figure}[t]
\centering
\vspace*{-1in}
\includegraphics[scale=0.55]{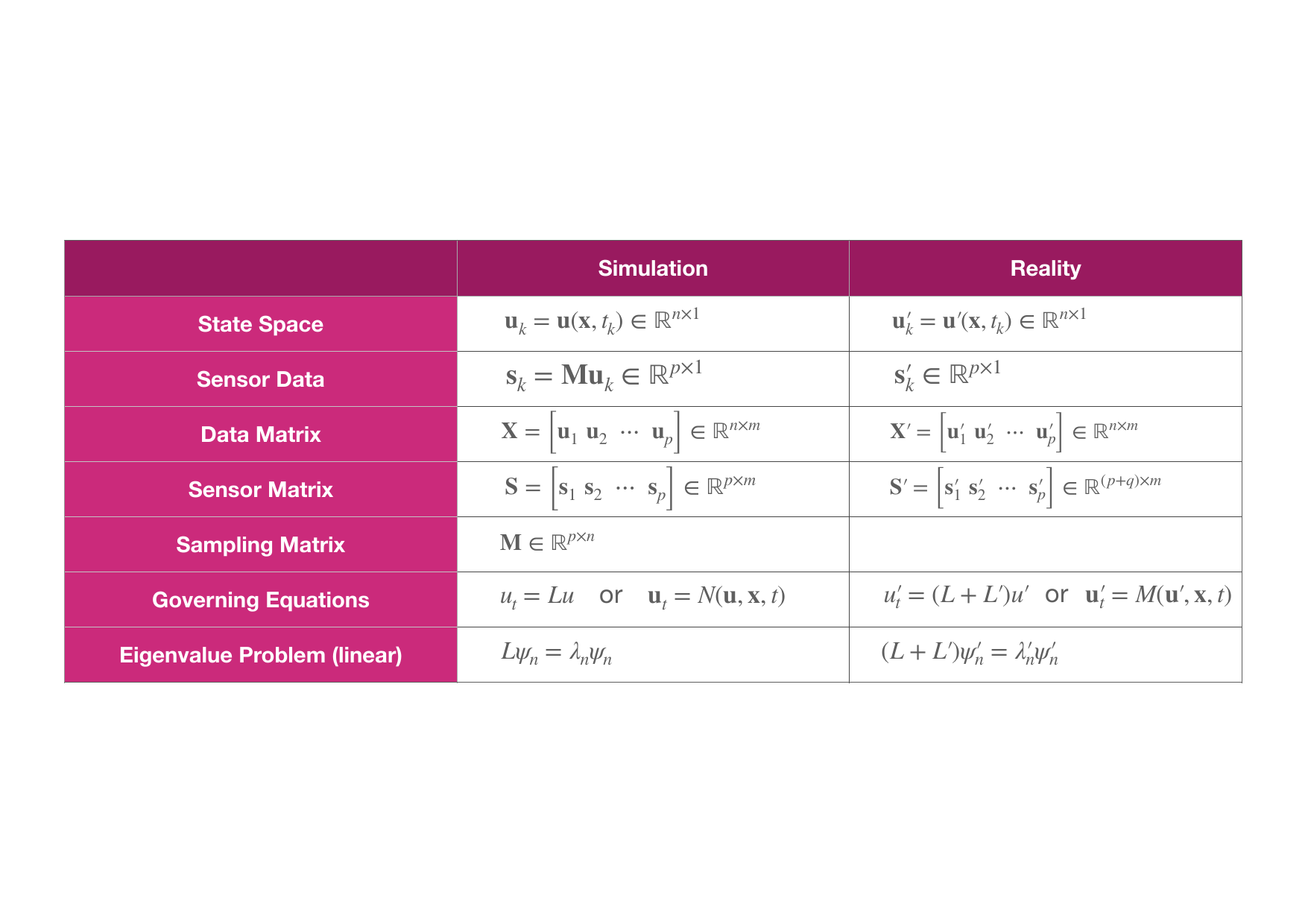}
\vspace*{-1in}
\caption{Summary of variables, data and models used in the DA-SHRED formulation.  The state space is of dimension $n$, there are $m$ snapshots of temporal measurements using $p$ sensors for SHRED training with an additional $q$ sensors deployed in reality. 
\label{fig:table}}
\end{figure}

\section{The Data-Assimilation SHRED (DA-SHRED) Architecture}


The overall goal is to close the SIM2REAL gap using the underlying SHRED architecture~\cite{williams2023sensing}.  Specifically, can we formally show that this can be done under suitable assumptions outlined here.  Thus given simulation data ${\bf X}$ for training a SHRED model, can we estimate the full state-space of real data ${\bf X}'$ given only limited point sensor measurements of reality ${\bf S}'$.  Or more formally:
\begin{eqnarray*}
  && \mbox{\bf input:} \,\,\, {\bf X}, {\bf S}' \\
  && \mbox{\bf output:} \,\,\,  {\bf X}'
\end{eqnarray*}
Or more precisely, we want a model mapping reality measurements to reality full-state estimates
\begin{equation}
  {\bf X}' = {\cal F}_{\theta} ({\bf S}')
\end{equation}
where we will learn a neural network model ${\cal F}$ with weights $\theta$ from training on simulation data ${\bf X}$ and updating it to match reality at the $p+q$ sensors deployed in reality. Additionally, we can also perform the following discrepancy modeling task with the DA-SHRED architecture:
\begin{eqnarray*}
  && \mbox{\bf input:} \,\,\, {\bf X}, {\bf S}' \\
  && \mbox{\bf output:} \,\,\,  {\bf X}', {{L}'}
\end{eqnarray*}
Thus we not only update the simulation  to reality by learning ${\cal F}_\theta$, but we also explicitly learn the missing physics term ${L}'$ using the SINDy method~\cite{brunton2016discovering,ebers2024discrepancy}.

\subsection{Linear, constant coefficient PDEs}

For this case, we can explicitly compute how the DA-SHRED architecture is able to update the model to data. The dynamics of the linear PDE model approximating reality are given by the governing equations
\begin{equation}
    u_t = Lu
\end{equation}
with some specified boundary conditions.  Note that we don't actually know the evolution equations for $u'$.  For constant coefficient operator $L$ and the corresponding reality model $L+{L}'$ (See Fig.~\ref{fig:table}), solutions can be easily calculated via separation of variables (approximated with $N$ modes) so that
\begin{equation}
    u = \sum_{n=1}^N a_n \psi_n \exp(\lambda_n t)
    \label{eq:basis_sim}
\end{equation}
and
\begin{equation}
    u' = \sum_{n=1}^N a'_n \psi'_n \exp(\lambda'_n t).
    \label{eq:basis_reality}
\end{equation}
The problem:  we don't actually know the triplet $a'_n, \psi'_n, \lambda'_n$ since we don't know ${L}'$. It was shown in~\cite{tomasetto2025reduced} that we can solve the {\em sensor boundary value problem} (SBVP) using point measurements and time trajectories at arbitrary sensor locations to uniquely determine the $a_n$ in  (\ref{eq:basis_sim}). This is instead of of the standard {\em initial value problem} (IBVP) which requires the entire state-space be known at a given instance of time for solving the PDE and determining the $a_n$.

Assuming that the solution of (\ref{eq:basis_reality}) can be approximated by the basis of (\ref{eq:basis_sim}), the solution for $u'$ is instead given by:
\begin{equation}
    u' = \sum_{n=1}^N a'_n \psi_n \exp(\lambda'_n t).
    \label{eq:basis_reality}
\end{equation}
Then we would use ${\bf S}'$ to update the simulation values to reality by computing $a'_n$ and $\lambda'_n$ from these measurements.  As shown  in~\cite{tomasetto2025reduced}, this can be done with simple linear algebra. Thus the SIM2REAL gap can be shown to be explicitly closed for linear, constant coefficient PDEs provided a common basis is sufficient for the approximation.

This gives a theoretical construct for understanding how the DA-SHRED architecture works.  Additional complexity arises for nonlinear PDEs.  If they are weakly nonlinear PDEs, then under mild assumptions, the above arguments hold perturbatively.  For strongly nonlinear dynamics, DA-SHRED is enacted computationally to show that the arguments above hold.  This will be shown in the four challenging examples of this paper.  Importantly, the underlying assumption in all these cases is that the learned decoder bases of the simulation remain a good approximation basis for real physics.  Otherwise, updating the bases to reality also needs to be considered.

As a final point, we highlight how the missing physics is found.  This is done by using a SINDy regression to model the missing physics~\cite{brunton2016discovering,kaheman2019learning,ebers2024discrepancy}:
\begin{equation}
    {L}' = \Theta \xi
    \label{eq:Lprime}
\end{equation}
where $\Theta$ is the library of candidate functions and $\xi$ is the loading vector which is required to be sparse.  We should be easily able to do this as this is directly related to what has been done previously for learning discrepancy terms~\cite{kaheman2019learning,ebers2024discrepancy}. 

As noted previously, if the basis representation shifts significantly between the simulation model and reality, then the decoder must be updated to more accurately represent the reality basis modes.  This remains challenging to do as there are no measurements for the reality modes, only a limited number of point measurements.  Thus the DA-SHRED model has no capability of modeling this out-of-distribution data.  This is beyond the scope of the current work.  However, in the models considered here, the assumption of using the basis learned from the simulation data appears to be sufficient to give highly accurate models.

\subsection{Port-Hamiltonian Systems}

Port-Hamiltonian systems (PHS) are a framework for modeling physical dynamics that exhibit energy conservation, interconnection and dissipation structures. Originating from classical Hamiltonian mechanics, PHS models have been formalized largely in the past two decades as an extension for modeling interactions, dissipation and enabling control inputs.

The input-state-output PHS, with no constraint on the state space variables, is typically represented in terms of the following equation\cite{van2006port}:

\[
\dot{x} = (J(x) - R(x)) \frac{\partial H}{\partial x} + g(x)u
\]
And the output equation is:
\[
y = g^T(x) \frac{\partial H}{\partial x}
\]
where $u$ and $y$  are input–output pairs, $H(x)$ is a Hamiltonian function,  $J(x)$ is skew-symmetric (\( J^T(x) = -J(x) \)), \( R(x) \) is a symmetric positive semi-definite dissipation matrix (\( R(x) = R^T(x) \geq 0 \)) that specifies a resistive structure, and \( g \) is the input matrix.

\begin{thm}[Persistence of PHS form under natural perturbations]\label{thm:main}
Let \eqref{eq:phs_basic} be a PHS on a simply connected domain $\mathcal{X}$. Consider perturbations $\Delta J,\Delta R,\Delta G$ (matrix fields) and a 1-form perturbation $\delta$ so that the perturbed dynamics are
\[
\dot x = \big((J+\Delta J)-(R+\Delta R)\big)\big(\nabla H + \delta\big) + (G+\Delta G)u.
\]
Assume:
\begin{enumerate}
  \item $\Delta J$ is skew-symmetric on $\mathcal{X}$,
  \item $\Delta R$ is symmetric and small enough on compact subsets to keep $R+\Delta R\succeq 0$,
  \item $\delta$ is closed: $d\delta = 0$ on $\mathcal{X}$.
\end{enumerate}
Then there exists $\widetilde H = H + \varepsilon$ with $\mathrm{d}\varepsilon=\delta$ and
\[
\widetilde J := J+\Delta J,\qquad \widetilde R := R+\Delta R,\qquad \widetilde G := G+\Delta G
\]
such that the perturbed system can be written globally in PHS form
\[
\dot x = (\widetilde J - \widetilde R)\nabla \widetilde H + \widetilde G\, u.
\]
\end{thm}

In an eigenspace-preserving PHS, the evolution of the dynamics is maintained within a specific decomposition of the state space. Such invariant could be useful in multiple fields including structured mechanical\cite{warsewa2021port}, electrical\cite{gernandt2021port}, or fluid \cite{rashad2021port} systems where distinct eigenmodes are expected under the PHS structure. In practice, the preservation of eigenspaces in port-Hamiltonian systems  often involves the careful design of $J$ and $R$ matrices, such that the transformation follows the spectral properties of the Hamiltonian \cite{van2016interconnections}.  Importantly, the PHS is a broad class of systems for which the decocder space is preserved and invariant.  Thus training DA-SHRED on PHS allows the decoder to discover an approximation of the invaraiant space.

\section{Results}

DA-SHRED provides an effective algorithm for closing the SIM2REAL gap in many complex physical systems, as demonstrated in the challenging examples presented here which include applications in rocket engines, chemical reactors and turbulent flows.  These examples generalize the underlying theory presented in the preceding section and establish the effectiveness of the algorithm on highly nontrivial spatio-temporal systems.




\begin{figure}[t]
    \begin{minipage}{0.55\linewidth}
  \begin{subfigure}{\linewidth}
    \begin{overpic}[width=\linewidth]{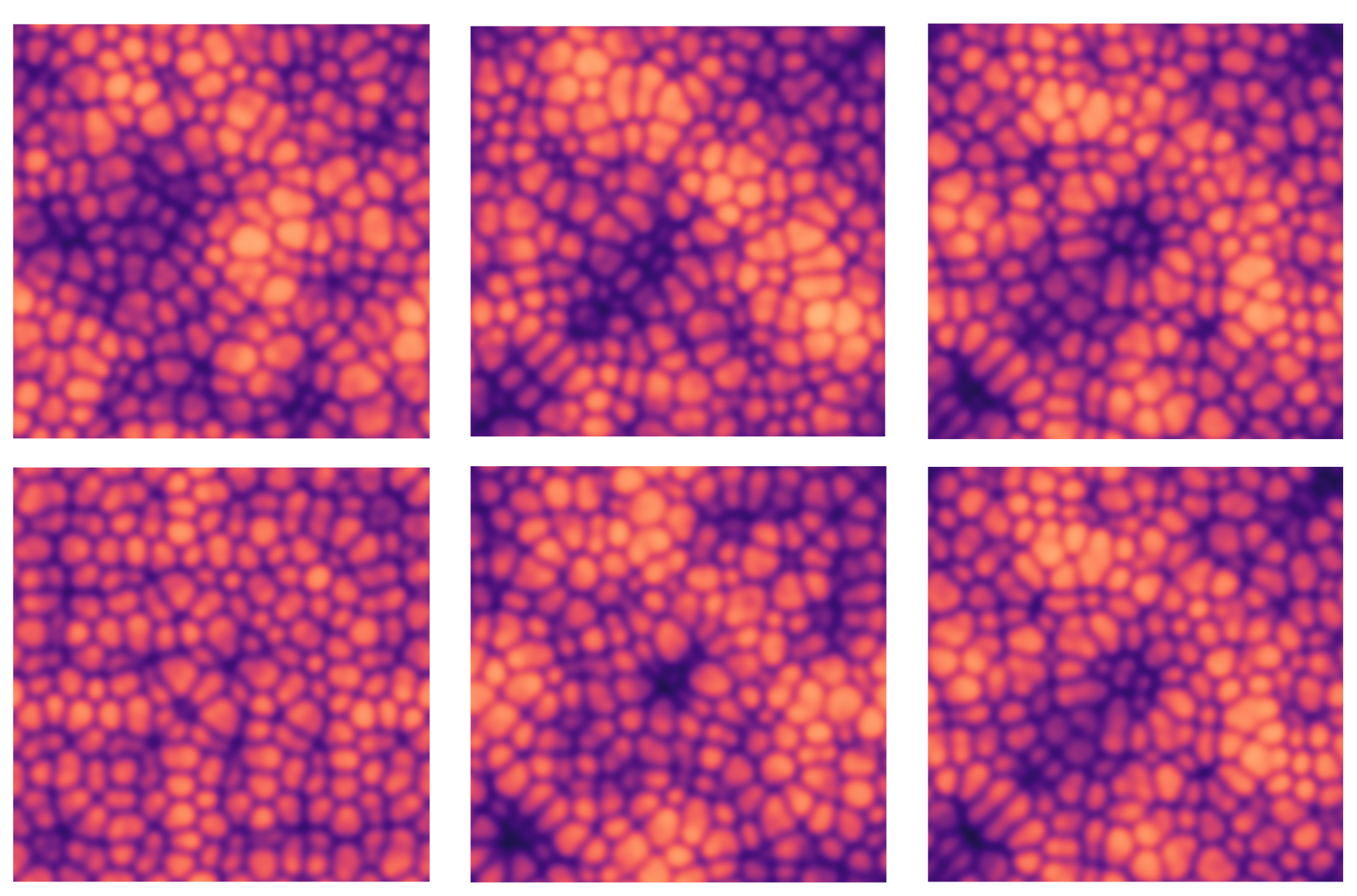}
    \put(7,66){simulation}
    \put(40,66){real physics}
    \put(74,66){DA-SHRED}
    \put(38,58){\color{white} $\bf t\!=\!12$}
    \put(38,26){\color{white} $\bf t\!=\!15$}
    \put(36,-3){\color{black} ${{L}'} = \{(\bf{u},\nabla \bf{u})\}$}
    \put(67,-3){\color{black} ${{L}'} = \{(\bf{u}, \nabla \bf{u}, {\color{gray}{\nabla ^3 \bf{u}}})\}$}
    \put(1,66){(a)}
    \put(34,66){(b)}
    \put(68,66){(c)}
    \put(110,58){(d)}
    \end{overpic}
    \caption*{}\label{subfig:key-a}
  \end{subfigure}
  \end{minipage}
  \begin{minipage}{0.45\linewidth}
    \begin{subfigure}{\linewidth}
      \vstretch{1.15}{\includegraphics[width=\linewidth]{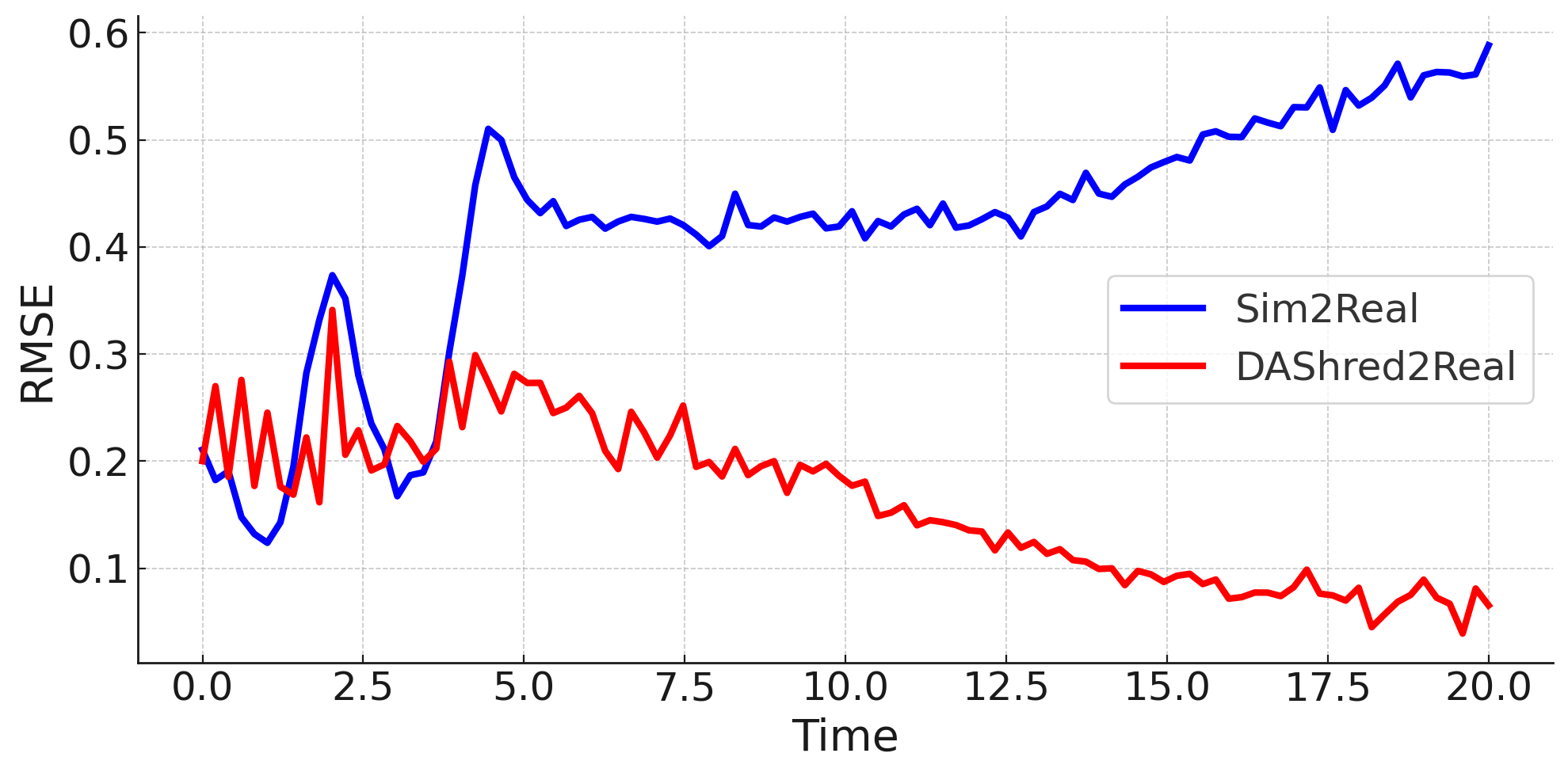}}
      \caption*{}\label{subfig:key-b}
    \end{subfigure}\hfill
  \end{minipage}
  \caption{The result of 2D damped Kuramoto-Sivashinsky equation. Figures on the same row are taken at the same timestep. The
left column (a) represents the undamped simulation model (2D KS equation without damping); the middle column (b)
represents unknown real physics (2D KS equation with damping); and the right column (c) represents state
space restored by DA-SHRED.  The error for the DA-SHRED correction to reality is shown in (d) where an order of magnitute improvement in accuracy is achieved within $T=20$.\label{fig:2DKS}}
\end{figure}

\subsection{2D damped Kuramoto–Sivashinsky equation}

The first example considered is the 2D version of the damped Kuramoto-Sivashinsky (2DKS) equation~\cite{jayaprakash1993universal,kalogirou2015depth}.  
The KS equation is one of the simplest canonical PDE models that produces spatio-temporal chaos.  It models a wide range of phenomena including instabilities of a laminar flame fronts, the dynamics of thin liquid films on inclined plane, and trapped-ion instabilities in a plasma.  The typical 2DKS, whose dynamics are illustrated in Fig.~\ref{fig:2DKS}, is given by the evolution equation
\begin{equation}
\bf{u}_t + \frac12 |\nabla \bf{u}|^2 + \nabla^2 \bf{u} + \nu\nabla^4 \bf{u} = 0 .
\label{eq:2DKS_sim}
\end{equation}
We assume this to be our model of reality. However, the true physics in our example will be given by a damped version of the 2DKS \cite{dehghan2019two} which is given by
\begin{equation}
\bf{u}_t^\prime  + \frac12 |\nabla \bf{u}^\prime|^2 + \nabla^2 \bf{u}^\prime + \nu\nabla^4 \bf{u}^\prime + (\bf{v}\cdot \nabla)\bf{u}^\prime - \gamma \bf{u}^\prime= 0
\label{eq:2DKS_real}
\end{equation}
with the additional damping term $(\bf{v}\cdot \nabla)\bf{u}^\prime - \gamma \bf{u}^\prime$. These additional terms represent the discrepancy from reality that must be learned in order to produce an effective model for prediction and characterization.  Thus DA-SHRED must first learn how to map from ${\bf u}$ to ${\bf u}'$ before eventually learning the additional damping terms.  Without loss of generality, we set the parameter $\nu = 1$. Figure \ref{fig:2DKS} exhibits the 2D evolution at times $t=12$ and $t=15$ on a domain of size $x\in[0,64]\times y\in[0,64]$.  The panels display the true physics (\ref{eq:2DKS_real}) (middle panels (b)) in comparison with the simulation of presumed reality (\ref{eq:2DKS_sim}) (left panels (a)) and the DA-SHRED improvement to reality (right panels (c)).  Importantly, not only does the DA-SHRED close the SIM2REAL gap, but it does so in almost real time as shown by the red error plot in the right panel (d) of Fig.~\ref{fig:2DKS}.  In contrast, the simulation model approximating reality continues to diverge from the true dynamics as shown by the blue error curve.  The behavior exhibited for the 2DKS is representative of all the other models considered here, i.e. the DA-SHRED architecture can be deployed to quickly update the model predictions and close the SIM2REAL gap.  Specifially, within a short evolution window, the DA-SHRED closes the SIM2REAL gap by approximately an order of magnitude in the error.  And as will be shown in the next section, the DA-SHRED architecture further allows for learning of the unknown discrepancy term of reality $L'$.  This will be discussed in the algorithms of the next section.

\subsection{2D Kolmogorov flow}

In the second example we consider the two-dimensional (2D) Kolmogorov flow, which is a canonical model in fluid dynamics.  Originally introduced by Arnold and Meshalkin in the 1960s to study hydrodynamic stability under periodic forcing~\cite{arnold1960, meshalkin1961investigation}. It is governed by the incompressible Navier--Stokes equations subject to a sinusoidal body force, typically expressed as $\mathbf{f}(x, y) = F \sin(k y)\,\hat{\mathbf{x}}$, which induces a laminar base flow that can lose stability as the Reynolds number increases. Despite its conceptual simplicity, the 2D Kolmogorov flow exhibits a rich spectrum of dynamical behaviors, including secondary instabilities, chaotic attractors, and quasi-periodic motion~\cite{platt1991investigation}. These features make it an ideal platform for investigating the mechanisms underlying turbulence transition and the emergence of coherent structures in two-dimensional flows~\cite{frisch1995turbulence, chandler2013invariant}. The system’s fully periodic boundary conditions and analytically tractable base state allow detailed numerical and theoretical studies of bifurcation sequences and energy transfer across scales~\cite{chandler2013invariant, lucas2014spatiotemporal}. More recent work has employed the Kolmogorov flow as a benchmark for reduced-order modeling, instability analysis, and prediction of extreme events such as rogue waves~\cite{farazmand2017reduced}.

In what follows, the 2D Kolmogorov flow is used as a fundamental testbed for understanding the performance of DA-SHRED. Our simulation model is a 2D Navier–Stokes equation with Kolmogorov forcing, which is given by
\begin{equation}
\begin{aligned}
\frac{\partial \bf{u}}{\partial t} + \bf{u} \cdot \nabla \bf{u} + \nabla p &= \nu \nabla^2 \bf{u} + F, 
\\
\nabla \cdot \bf{u} &= 0
\end{aligned}
\end{equation}
where $\bf{u}$ is a two-dimensional velocity field, with pressure $p$, forcing $F$ and viscosity $\nu = \frac{1}{R_e}$. Reformulated in terms of vorticity $\omega = \nabla \times \bf{u}$, the governing equation can be rewritten as~\cite{borue1996numerical}
\begin{equation}
\begin{aligned}
\frac{\partial \omega}{\partial t} + \bf{u} \cdot \nabla \omega &= \nu \nabla^2 \omega + f, 
\\
\nabla \bf{u} &= 0
\end{aligned}
\label{eq:2DKol_sim}
\end{equation}
This gives a model of idealized 2D Kolmogorov flow.  The true physics is  similarly assumed to be a damped version of the 2D Kolmogorov flow, given by
\begin{equation}
\begin{aligned}
\frac{\partial \omega^\prime}{\partial t} + \bf{u}^\prime \cdot \nabla \omega^\prime + \alpha \omega^\prime  &= \nu \nabla^2 \omega^\prime + f, 
\\
\nabla \bf{u}^\prime &= 0
\end{aligned}
\label{eq:2DKol_real}
\end{equation}
with $\nu = 0.01$, linear damping coefficient $\alpha = 0.12$, and $f = \mu \sin(y)$ where $\mu$ is the forcing amplitude.

Figure~\ref{fig:2DKol} exhibits the 2D Kolmogorov evolution of the $x$ and $y$ components of velocity (top and bottom panels respectively) on the domain $x\in[0,64]\times y\in[0,64]$ for times $t=30$ and $t=35$. As displayed previously, the panels show the true physics (\ref{eq:2DKol_real}) (middle panels (b)) in comparison with the simulation of presumed reality (\ref{eq:2DKol_sim}) (left panels (a)) and the DA-SHRED improvement to reality (right panels (c)).  As with the 2DKS, not only does the DA-SHRED close the SIM2REAL gap, but it does so in almost real time as shown by the error plot in the right panel (d) of Fig.~\ref{fig:2DKol}.  The error plots show the short-time and long-time decay of the error as the DA-SHRED continues to refine the solution approximation to reality.

\begin{figure}[t]
\centering
\begin{minipage}{0.5\linewidth}
  \begin{subfigure}{\linewidth}
    \begin{overpic}[width=\linewidth]{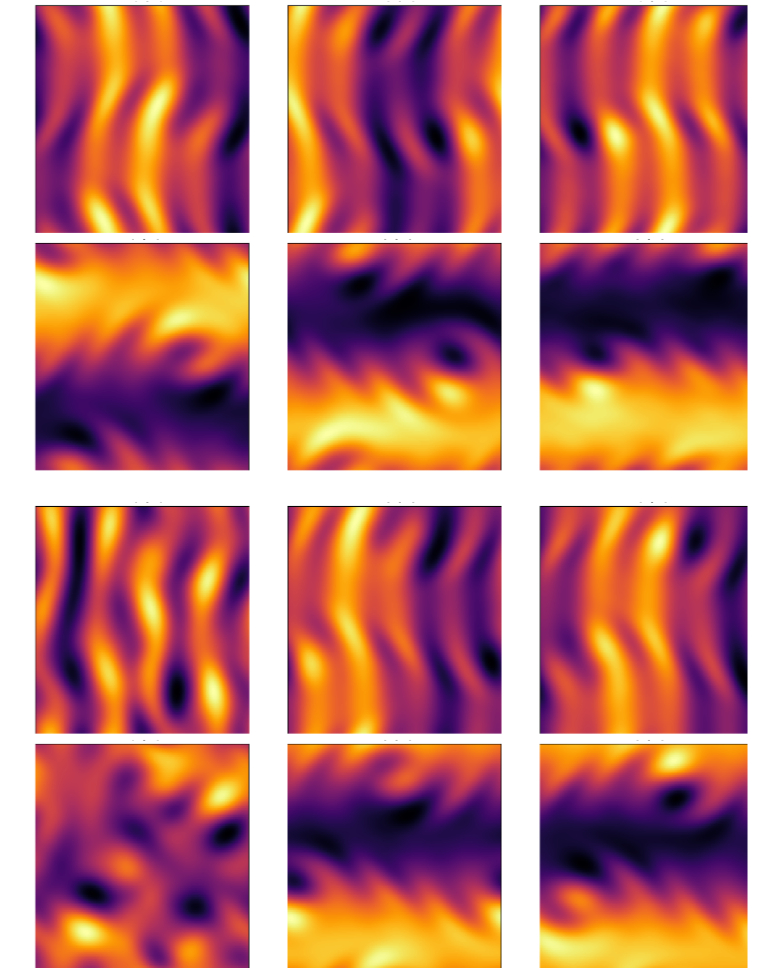}
    \put(8,101){simulation}
    \put(32,101){real physics}
    \put(57,101){DA-SHRED}
    \put(34,13){\color{white} $\bf t\!=\!35.0$}
    \put(34,64.5){\color{white} $\bf t\!=\!30.0$}
    \put(33,-4){\color{black} ${{L}'} = \{(\omega)\}$}
    \put(53,-4){\color{black} ${{L}'} = \{(\omega, {\color{gray}{\omega\nabla^2 \omega}} )\}$}
    \end{overpic}
    \caption*{}\label{subfig:key-a}
  \end{subfigure}
  \end{minipage}
\begin{minipage}{0.45\linewidth}
    \begin{subfigure}{\linewidth}
      \vstretch{1.1}{\includegraphics[width=\linewidth]{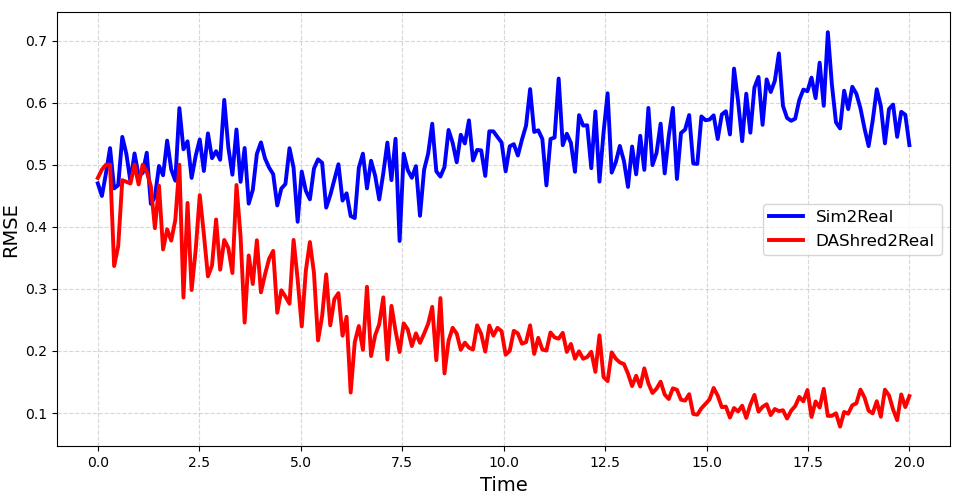}}
      \caption*{}\label{subfig:key-b}
    \end{subfigure}\hfill

    \medskip
    \begin{subfigure}{\linewidth}
      \vstretch{1.1}{\includegraphics[width=\linewidth]{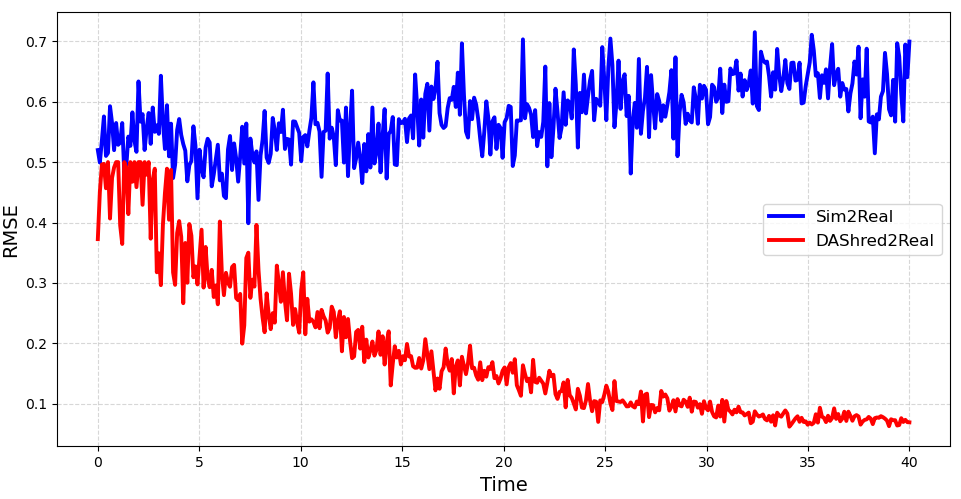}}
      \caption*{}\label{subfig:key-c}
    \end{subfigure}\hfill
  \end{minipage}
\caption{The result of 2D damped Kolmogorov flow. Figures on the same row are taken at the same timestep. The left column represents the undamped simulation model (2D Kolmogorov flow without damping); the middle column represents unknown real physics (2D Kolmogorov flow with linear damping); and the right column represents state space restored by DA-SHRED.
\label{fig:2DKol}}
\end{figure}

It is important to emphasize that achieving the strong DA-SHRED restoration performance in this example required a relatively larger number of randomly placed sensors ($p+q=20$) for the real physics to adequately span the state space. By contrast, the simulation model is more forgiving - a conventional SHRED architecture can often reconstruct the full state space using only a few sensors ($p=3$), since the simulation is self-consistent and lacks the SIM2REAL discrepancies presented in real physics.

When the number of sensors in the real physical system is low, it is still possible to match the sensor measurements reasonably well. However, this is often achieved through a reformulation of the latent space, which produces a low loss at the sensor level but can correspond to major differences in the full-field restoration. One potential source of these discrepancies is the nontrivial transient stage of the system - the SHRED may overfit to the transient behavior captured by the sensors, or conversely overfit to steady-state patterns, depending on which phase dominates the limited sensor data. Critically, because the true physical state remains unobserved at the state-space level in our problem setting, there is no straightforward way to detect these discrepancies. In other words, insufficient sensors may lead not only to inaccurate discrepancy models but also to a false sense of confidence, since standard sensor-based metrics may appear acceptable.

Increasing the number of excess sensors $q$ therefore plays a crucial role in closing the SIM2REAL gap. In practice, additional sensors provide broader coverage of state-space regions where the system exhibits relatively silent or weakly active dynamics, such as during nontrivial transient stages. By including these regions in the sensor set, the SHRED is less likely to overfit to specific patterns in the active transients, reducing latent-space misalignment and improving the robustness of full-field restoration. In this experiment, the total number of sensors could be further reduced ($p+q=10$) with additional strategies - for example, with prior knowledge about the transition from transient to steady-state behavior together with reweighting techniques that prioritize steady-state patterns. 

It is worth noting that in the 2D Kuramoto–Sivashinsky equation studied above, the transient stage was relatively silent, and thus the DA-SHRED architecture was less sensitive to sensor placement. In more general systems, however, transient dynamics can be highly nontrivial, and without prior knowledge of when the flow transitions to a steady-state, one must rely on a denser sensor network to ensure reliable restoration. In these cases, increasing the number of sensors deployed for real physics($q$) is a practical safeguard to capture the latent dynamics and close the gap for SIM2REAL discrepancies.

\subsection{2D reaction-diffusion system}

\begin{figure}[t]
    \begin{minipage}{0.55\linewidth}
  \begin{subfigure}{\linewidth}
    \begin{overpic}[width=\linewidth]{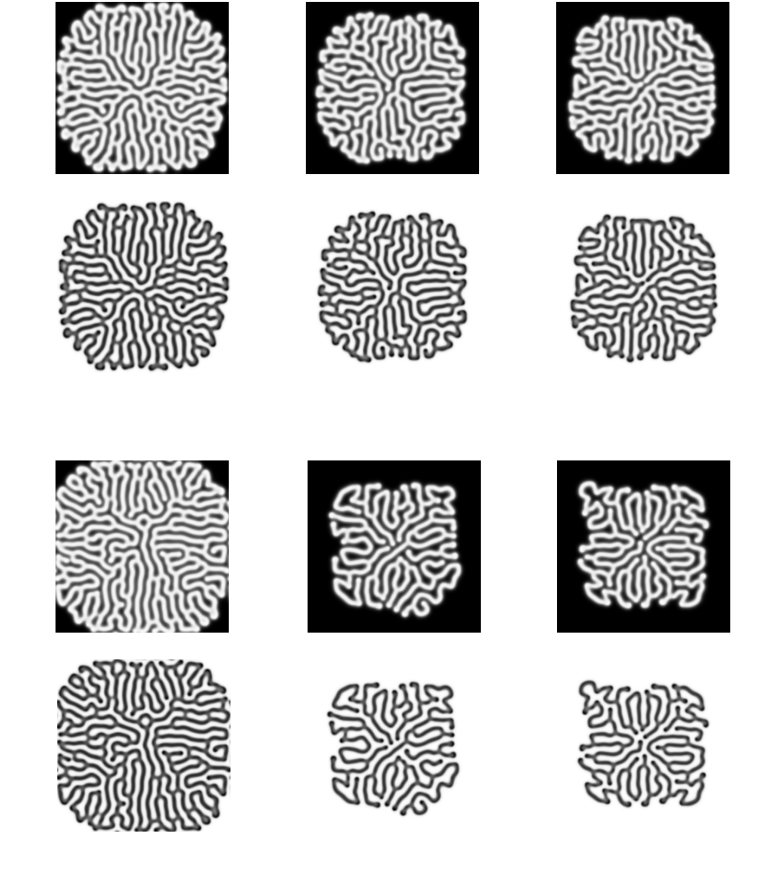}
    \put(8,101){simulation}
    \put(35,101){real physics}
    \put(62,101){DA-SHRED}
    \put(33,1){\color{black} ${{L}'} = \{(U^2V)\}$}
    \put(62,1){\color{black} ${{L}'} = \{(U^2V, {\color{gray}{V^3}})\}$}
    \put(33,53){\color{black} ${{L}'} = \{(V^2)\}$}
    \put(57,53){\color{black} ${{L}'} = \{(V^2, {\color{gray}{UV, V^3}})\}$}
    \end{overpic}
    \caption*{}\label{subfig:key-a}
  \end{subfigure}
  \end{minipage}
  \begin{minipage}{0.45\linewidth}
    \begin{subfigure}{\linewidth}
      \vstretch{1.1}{\includegraphics[width=\linewidth]{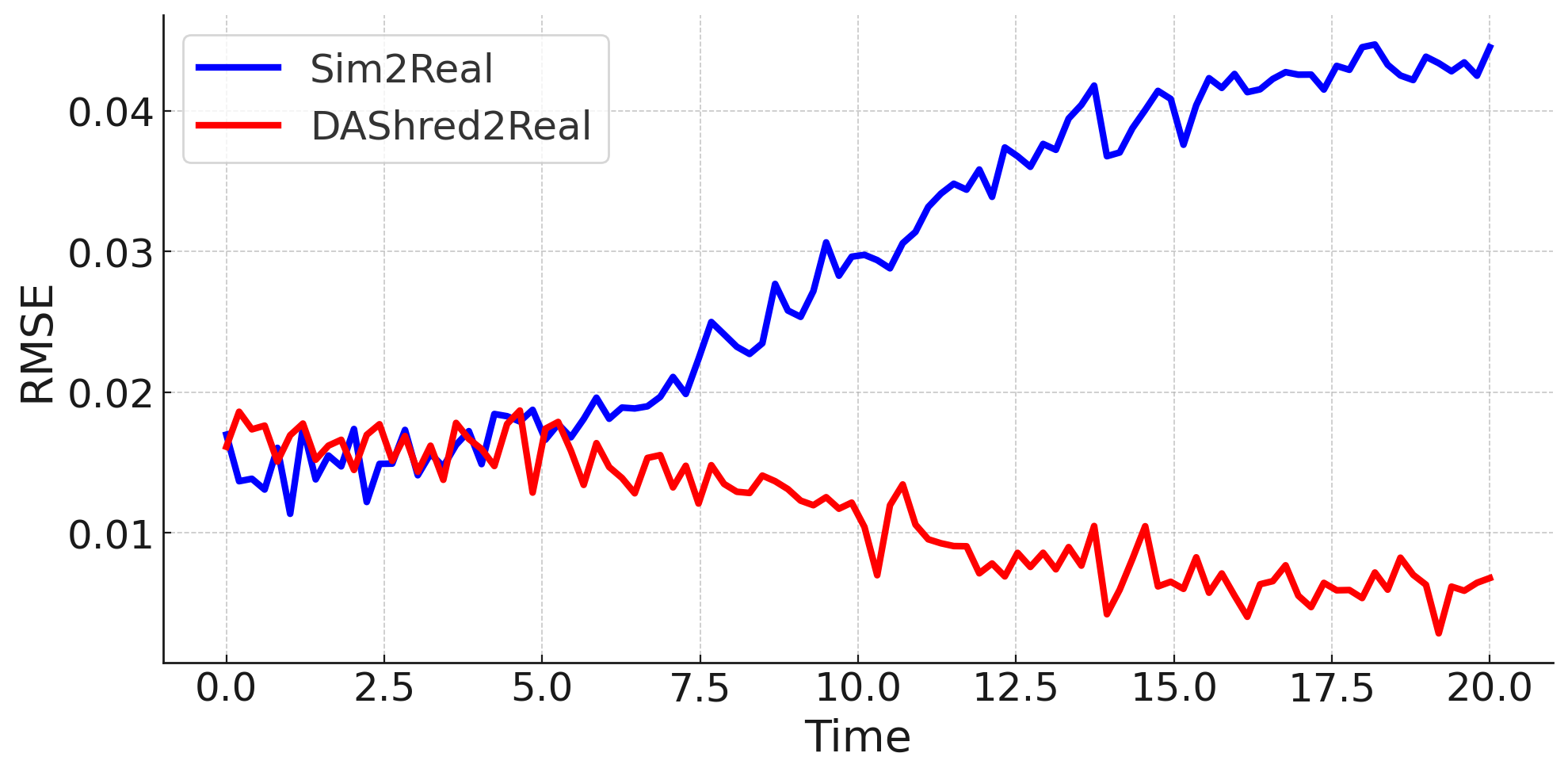}}
      \caption*{}\label{subfig:key-b}
    \end{subfigure}\hfill

    \medskip
    \begin{subfigure}{\linewidth}
      \vstretch{1.1}{\includegraphics[width=\linewidth]{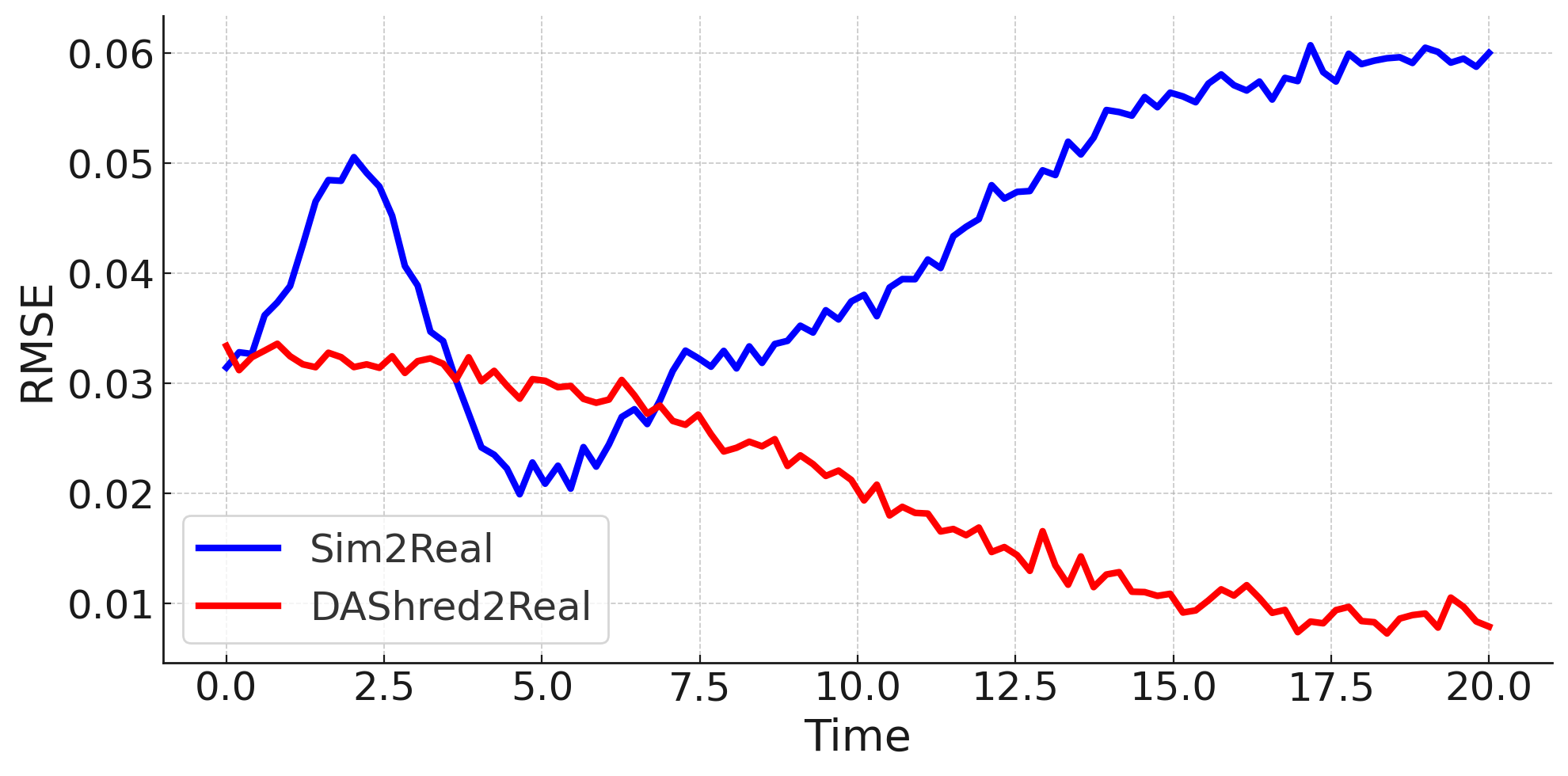}}
      \caption*{}\label{subfig:key-c}
    \end{subfigure}\hfill
  \end{minipage}
  \caption{The result of 2D damped Grey Scott model. The left figure represent the undamped simulation model; the right one represents the real model and the bottom one represents the DA-SHRED result towards convergence.}
  \label{fig:GS}
\end{figure}

The third example we consider is the two-dimensional (2D) Gray–Scott reaction–diffusion model, which is a prototypical system for studying pattern formation and spatiotemporal dynamics in nonlinear chemical and biological systems. Originally derived as a simplified representation of autocatalytic chemical reactions~\cite{gray1984autocatalytic}, the model describes the interaction between two chemical species, $U$ and $V$, undergoing local reactions and diffusive transport. The governing equations take the form
\begin{equation}
\frac{\partial U}{\partial t} = D_u \nabla^2 U - UV^2 + F(1 - U), \qquad
\frac{\partial V}{\partial t} = D_v \nabla^2 V + UV^2 - (F + k)V
\end{equation}
where $D_u$ and $D_v$ are diffusion coefficients, while $F$ and $k$ represent feed and removal rates, respectively. Depending on parameter choices and initial conditions, the Gray--Scott model exhibits a wide variety of self-organized patterns --- including spots, stripes, labyrinths, and oscillatory structures~\cite{pearson1993complex, lee1993pattern}. It has since become a canonical system for exploring nonlinear instabilities, Turing mechanisms, and pattern formation in reaction–diffusion dynamics~\cite{meron1992pattern,roussel2004pattern, kyrychko2009control, ali2023spatiotemporal}. The simplicity and rich phenomenology of the Gray–Scott equations make them an essential benchmark for numerical experiments, data assimilation, and reduced-order modeling of spatiotemporal processes~\cite{giampaolo2022physics}

%

The true physics is similarly a damped version of it, given by
\begin{equation}
\frac{\partial U^\prime}{\partial t} = D_{u^\prime} \nabla^2 U^\prime - U^\prime {V^\prime}^2 - \alpha {V^\prime}^2 + F(1 - U^\prime), \qquad
\frac{\partial V^\prime}{\partial t} = D_{v^\prime }\nabla^2 V^\prime + U^\prime {V^\prime}^2 - (F + k)V^\prime
\end{equation}
with damping coefficient $\alpha$, and f, k are feed and kill rates, respectively. Simulation results for this are given by the top rows of Fig.~\ref{fig:GS}.  As before, the real physics for $U$ and $V$ is given by the middle row of panels, the left panels are our simulation of reality, and the right panels are the DA-SHRED updates with the error curves given in the far right.

In a second discrepancy modeling experiment, we make the damping term a functional involving both parameters as following
\begin{equation}
\frac{\partial U^\prime}{\partial t} = D_{u^\prime} \nabla^2 U^\prime - U^\prime {V^\prime}^2 - \beta {U^\prime}^2V^\prime + F(1 - U^\prime) \qquad
\frac{\partial V^\prime}{\partial t} = D_{v^\prime }\nabla^2 V^\prime + U^\prime {V^\prime}^2 - (F + k)V^\prime
\end{equation}
with damping coefficient $\beta$, and f, k are feed and kill rates, respectively. The results of this SIM2REAL gap and the ability of DA-SHRED to close the gap is given in Fig.~\ref{fig:GS} (bottom panels).
Consistent with the previous case, the middle panels show the true dynamics of $U$ and $V$, the left panels present our simulated result, and the right panels depict the DA-SHRED reconstructions with corresponding error curves in the outermost column.

\subsection{Rotating Detonation Engines (RDEs)}

Over recent decades, the concept of rotating detonation engines has emerged as a pathway towards realizing pressure-gain combustion for propulsion and power-generation systems. Unlike conventional deflagration-based combustion, which proceeds subsonically at nearly constant pressure, RDEs sustain supersonic detonation fronts that continuously propagate in the azimuthal direction of an annular combustor. It enables rapid and efficient energy release, offering potential improvements in thermal efficiency and specific impulse compared with constant-pressure Brayton-cycle engines. \cite{wolanski2013detonative}

The applications of RDEs has been most prominently investigated for aerospace applications, especially air-breathing and rocket propulsion systems. In rocket propulsion, RDEs have demonstrated the capacity to improve specific impulse and reduce propellant consumption due to their inherently higher stagnation pressures compared to conventional combustors. In air-breathing engines, they offer compact architectures with reduced turbomachinery requirements and enhanced fuel efficiency~\cite{lu2014rotating}. Moreover, RDEs are also being explored for stationary power generation, particularly as a means of improving gas-turbine cycle performance and reducing fuel consumption~\cite{sousa2017thermodynamic}.

\begin{figure}[t]
  \begin{minipage}{1.0\linewidth}
    \centering
    \begin{overpic}[width=\linewidth]{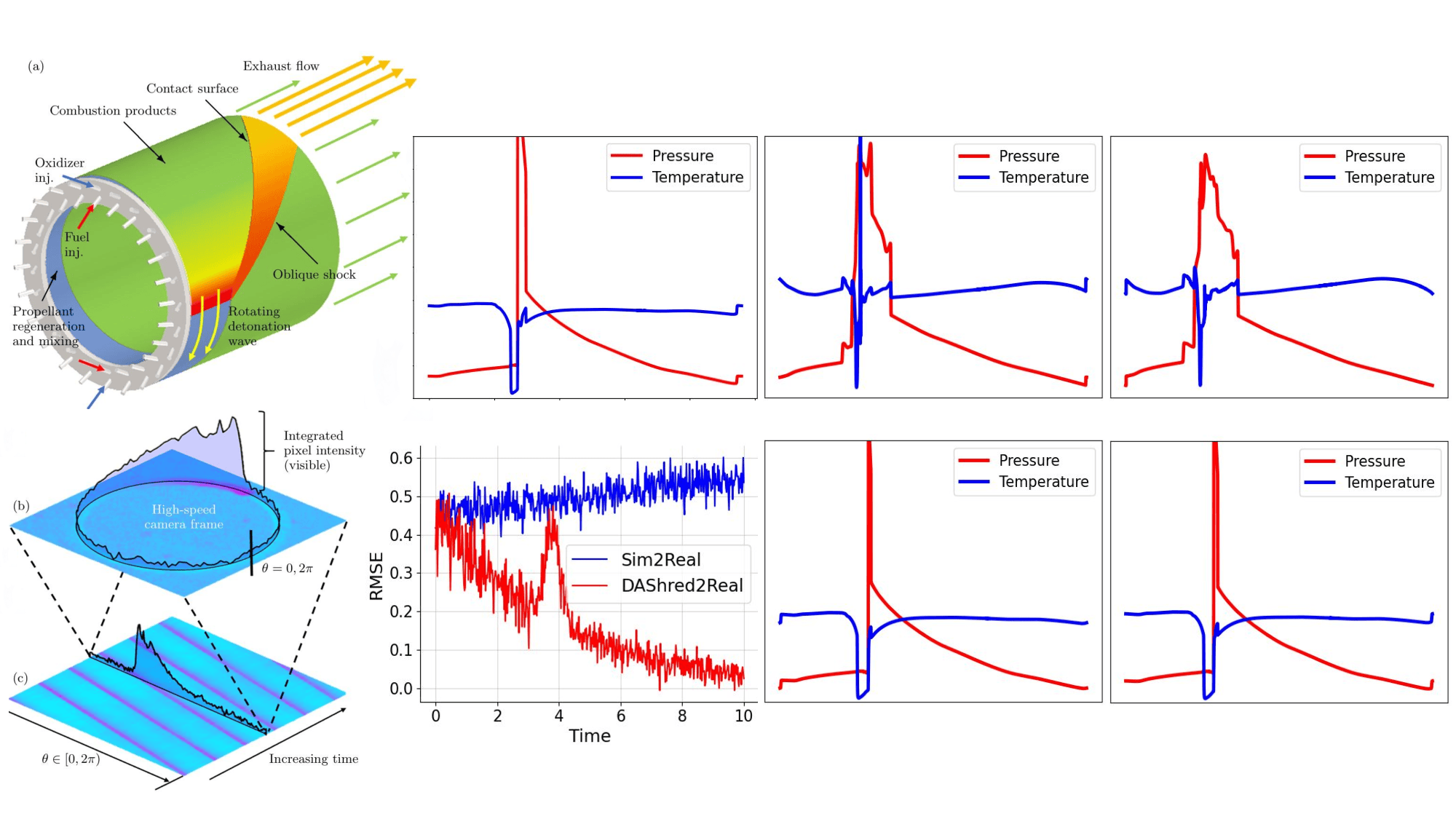}
    \put(35,48){simulation}
    \put(59,48){real physics}
    \put(82,48){DA-SHRED}
    \put(58,27){\color{black} ${{L}'} = \{({\bf u},{\bf u}^3)\}$}
    \put(79,27){\color{black} ${{L}'} = \{({\bf u},{\bf u}^3, {\color{gray}{\lambda^2 {\bf u}}})\}$}
    \put(59,6){\color{black} ${{L}'} = \{({\bf u})\}$}
    \put(82,6){\color{black} ${{L}'} = \{({\bf u})\}$}
    \end{overpic}
  \end{minipage}
  \caption{The annotated figure on the left is from Koch's et al~\cite{koch2020nonlinear} depicting the canonical flowfield of the Rotating Detonation Engine. The figures on the right depict DA-SHRED result of 1D damped RDE model. One simulation model is deployed on two different real physics settings, and the rightmost column represents the corresponding two DA-SHRED results towards convergence. The RMSE of the simulation model versus the DA-SHRED model is provided on the bottom left.
  \label{fig:rde}}
\end{figure}

The modeling of RDEs requires capturing the coupled dynamics of compressible flow, shock propagation, and chemical reactions within an annular channel~\cite{koch2020mode,koch2020modeling,koch2021multiscale}. Such coupled processes have been analyzed in both experimental and numerical studies of RDEs, revealing the intricate interplay between detonation fronts and the reacting flow field~\cite{hishida2009fundamentals, schwer2011numerical} and showing a diverse range of bifurcations and instabilities~\cite{koch2021multiscale} that are characteristic of damp-driven systems~\cite{kutz2022universal}. These processes occur over disparate spatial and temporal scales, where detonation fronts propagate at supersonic speeds while chemical reactions and mixing evolve on much smaller scales. The interaction between the detonation front, the unsteady flow field, and the continuous injection of reactants produces strong pressure and temperature gradients that challenge conventional steady-state or quasi-one-dimensional assumptions~\cite{kawashima2017quasi}. Even simplified flow geometries exhibit complex wave structures. As discussed in recent thermodynamic analyses of RDE~\cite{koch2020modeling}, the inherently transient nature of detonation waves and their sensitivity to boundary conditions, injector dynamics, and heat release make numerical modeling both computationally demanding and physically intricate. Accurately representing these multi-physics couplings is therefore essential to predict the thermodynamic trends, stability, and efficiency of RDE operation. To reduce this complexity while retaining the essential detonation dynamics, the following simplified 1D model is considered, which captures the primary propagation behavior of the detonation front.
\begin{subequations}
\begin{align}
\frac{\partial u}{\partial t} + u \frac{\partial u}{\partial x}
= q\, k (1-\lambda) e^{\tfrac{u - u_c}{\alpha}}
- \epsilon u^2 \\
\frac{\partial \lambda}{\partial t}
= k (1-\lambda) e^{\tfrac{u - u_c}{\alpha}}
- \beta(u, u_p, s) \lambda
\end{align}
\end{subequations}
The first equation describes energy input output and the second equation describes gain depletion and recovery, where $u_p$ is the injection threshold parameter and $\beta(u, u_p, s) = \frac{s u_p}{1 + e^{r (u - u_p)}}$.

Data assimilation problems with RDEs are often difficult because the state estimations have to contend with the highly nonlinear, discontinuous, and transient nature of detonation waves. Even for the simplified simulation model, the PDEs are not only nonlinear but also extremely sensitive to initial and boundary conditions. The presence of steep gradients at the detonation front further challenges our ability to close the gap between simulation and reality as slight errors can easily propogate and lead to different modes and instabilities. Therefore, data assimilation must not only correct the evolving state but also converge on the inferrence of the damping parameters. Parameter estimation in such nonlinear, stiff systems are notoriously unstable.

In this example, we use the above system of equations as the simulation model, while the real physics is a damped version. We explored three different damping scenarios, given by adding a diffusive effect on overall energy, perturbing the gain depletion, and perturbing the energy output.  The true physics model we construct is the following governing set of equations
\begin{subequations}
\begin{align}
\frac{\partial u^\prime}{\partial t} + u^\prime \frac{\partial u^\prime}{\partial x}
= q\, k (1-\lambda^\prime) e^{\tfrac{u^\prime - u^\prime_c}{\alpha}}
- \epsilon {u^\prime}^2 + \epsilon_2(u^\prime - {u^\prime}^3) \\
\frac{\partial \lambda^\prime}{\partial t}
= k (1-\lambda^\prime) e^{\tfrac{u^\prime - u^\prime_c}{\alpha}}
- \beta(u^\prime, u^\prime_p, s) \lambda^\prime .
\end{align}
\end{subequations}
Simulations of reality and the computational model are given in Fig.~\ref{fig:rde} for two different discrepancy models.

Figure~\ref{fig:rde} shows how the DA-SHRED model can successfully track the detonation front without delays while keeping the conservation structure of the governing equations. When distortions are not too strong and abrupt, perturbations of different kinds can be reconciled well with the data assimilation model. 
As before, the real physics for $U$ (red line) and $\lambda$ (blue line) are given by the middle row of panels, the left panels are our simulation of reality, and the right panels are the DA-SHRED updates with the error curves given in the bottom left.   As illustrated in every model considered, the DA-SHRED quickly lowers the SIM2REAL error by an order of magnitude as data is collected.

\section{Discrepancy Modeling with DA-SHRED}

\begin{figure}[t]
    \begin{minipage}{0.90\linewidth}
  \begin{subfigure}{1.1\linewidth}
    \begin{overpic}[width=\linewidth]{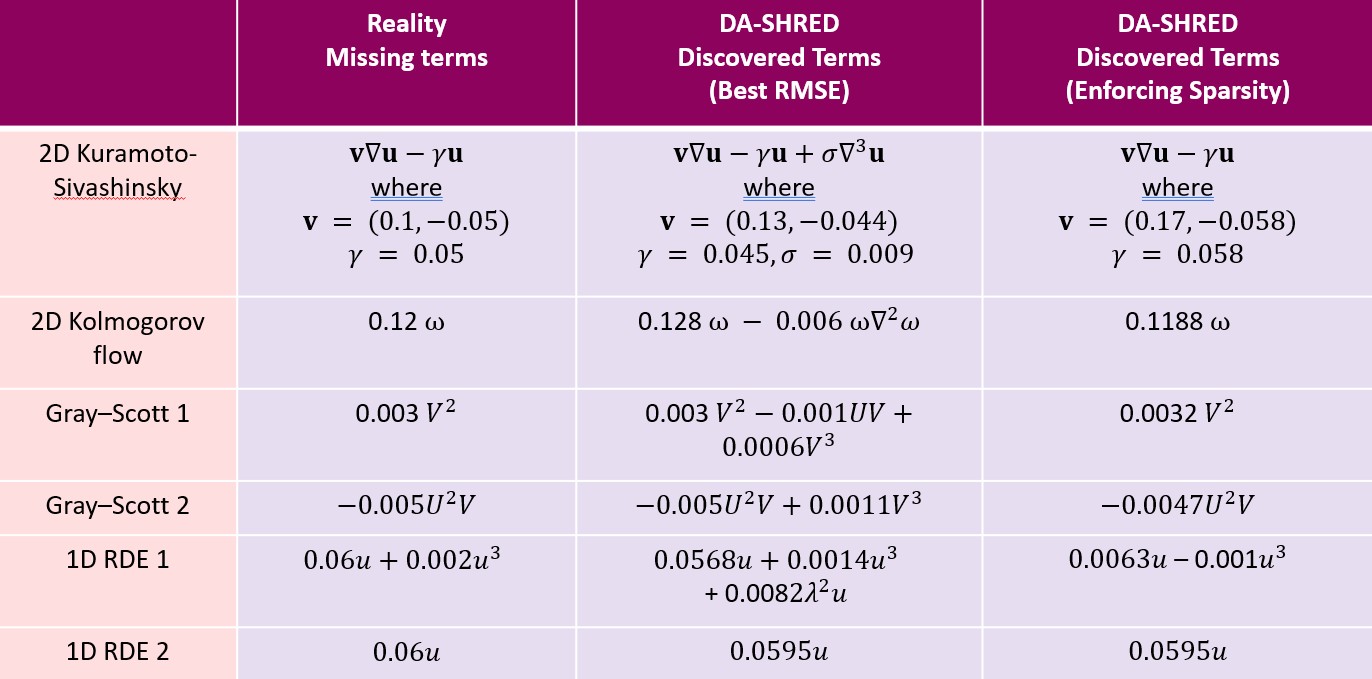}
    \end{overpic}
    \caption*{}\label{subfig:key-a}
  \end{subfigure}
  \end{minipage}
  
  \caption{The full list of missing terms $L'$ in real physics compared to our DA-SHRED restoration $L^\prime$ for the experiments outlined in the previous section.  The two columns illustrating the recovery highlight the trade-off between targeting the reduction of RMSE (middle column) and sparsity (right column).  With an unknown $L'$, a balance between RMSE and sparsity must be hyper-parameter tuned for the best discrepancy model.}
  \label{fig:discrepancy}
\end{figure}

In the DA framework introduced above, the SIM2REAL gap is modeled as the discrepancy between the simulation model $\bf{u_t} = N(\bf{u},\bf{x},t)$, which captures the approximate governing physics, and the real, unknown dynamics $\bf{u^\prime_t} = M(\bf{u^\prime},\bf{x},t) = N(\bf{u^\prime},\bf{x},t) + L' (\bf{u^\prime},\bf{x},t)$, where $L'$ denotes a small, state-space dependent perturbation term that encodes the unknown or unresolved physics. While the data assimilation step reduces the effective mismatch between the two systems, it does not in itself provide the explicit functional structure of the missing physics. Recovering the exact form of the missing functional terms $L'$ remains an open inverse problem. To address this challenge, we present below two SINDy-based~\cite{brunton2016discovering} algorithms that augment the DA-SHRED framework and aim to identify the unknown terms directly from SIM2REAL trajectories by sparse regression (\ref{eq:Lprime}).

In practical settings, the recovery of $L'$ has an inherent trade-off between predictive accuracy and structural sparsity. As shown in the original SINDy formulation~\cite{brunton2016discovering}, models that achieve extremely low prediction error often do so by selecting additional terms in the library to lower RMSE, effectively over-parameterizing the dynamics. Similarly, the DA-SHRED model that minimizes RMSE tends to incorporate additional candidate functions - reflecting noise, numerical artifacts, or overfitting to uninformative directions - resulting in expressions that may deviate from the true physical process. However, identifying the stopping point for sparsification remains challenging, as excessive sparsity risks eliminating essential operators, whereas insufficient sparsity retains spurious terms. This characteristic tradeoff is illustrated in in Figure 7, where sparser DA–SHRED reconstructions more accurately recover the true governing operators despite yielding slightly worse numerical fits to the data in regards to RSME.

The ability to determine how many terms are missing from the simulation model $N(\bf{u^\prime},\bf{x},t)$  depends critically on available physical prior knowledge. In systems derived from well-studied PDEs, symmetry arguments, conservation laws, or known operator hierarchies,  the missing physics can be constrained within a small set of candidate terms, such as higher-order diffusion, nonlinear advection, or corrections in the form of divergence. In these situations, classical closure models and scale-separation theory~\cite{pope2001turbulent, sagaut2006large} often provide clear expectations about the number and type of missing contributions, making sparse discovery more reliable.

Conversely, in systems where external forcing, boundary interactions, unmeasured actuation, or nonlocal effects play a dominant role, little or no prior knowledge exists regarding the dimensionality of the missing operator space. In such cases, the accuracy–sparsity trade-off becomes even more influential, and overfitting or underfitting risks are elevated due to the lack of constraints on the candidate library.

\subsection{In the compressed space}

\begin{algorithm}[t]
\caption{Compressed search SINDy}\label{alg:cap}
\begin{flushleft}
 \textbf{Input:} 

A simulation model $\dot{\textbf u} = N(\textbf u,\textbf x,t)$
 
A multivariate time-series of simulation state space $\{\textbf u_i\}_{i=1}^{m}$ and corresponding sensor measurements of the system $\{\textbf s_i\}_{i=1}^{m}$

A multivariate time-series of real state space $\{\textbf u_i^\prime\}_{i=1}^{m}$ and corresponding sensor measurements of the system $\{\textbf s_i^\prime\}_{i=1}^{m}$
 
A DA-SHRED architecture $\mathcal{G}(\{\textbf s_i^\prime\}_{i=t-k}^{t}; \textbf W_R) = \mathcal{T}(\mathcal{G}(\{\textbf s_i\}_{i=t-k}^{t}; \textbf W_S); \textbf W_T)$ and a shallow decoder $\mathcal{F}$ for restoration from the compressed latent space

A library of functional candidates $\boldsymbol\Theta = \{\theta_j\}_{j=1}^n$ \\

\textbf{Algorithm:} 

At time step t, perform the following

Generate $\tilde u_{t}^\prime = \mathcal{F} (\mathcal{G}(\{\textbf s_i^\prime\}_{i=t-k}^{t}; \textbf W_R))$ of the current restored real state space 

Compute $\tilde u_{t,j}^\prime = \tilde u_{t-1,j}^\prime + (N(\tilde u_{t-1,j}^\prime,\textbf x,t) + \theta_j(\tilde u_{t-1,j}^\prime,\textbf x,t))dt$ of perturbed state space

Acquire proxies in the latent space using the trained SHRED model $G_t(\dot{\textbf u}) = \mathcal{G}(\tilde u_{t}^\prime; \textbf W_S)$ and $H_t(\boldsymbol\Theta) = \mathcal{G}(\{\tilde u_{t,j}^\prime\}_{j=1}^n; \textbf W_S)$

Optimize $\xi \in argmin_{\xi} \frac{1}{2} ||G_t(\dot{\textbf u}) -H_t(\boldsymbol\Theta) \xi||_2^2 + K(\xi)$, where $K(\xi) = ||\xi||_1$ is a regularizer chosen to promote sparsity in $\xi$. It can be solved by sparsity-promoting SINDy method

\end{flushleft}
\end{algorithm}

The first algorithm we propose is a SINDy-based regression procedure applied directly to the difference dynamics expressed in the compressed (latent) space. In a well-behaved DA–SHRED assimilation, we obtain accurate latent-space representations of both the simulation operator $N(\bf{u^\prime},\bf{x},t)$ and the real dynamics $M(\bf{u^\prime},\bf{x},t) = N(\bf{u^\prime},\bf{x},t) + L'$. Consequently, the discrepancy between these two representations - in the form of a latent-space difference matrix -encodes the action of the missing physics  projected onto the compressed manifold. Recovering the physical operators responsible for this discrepancy reduces the algorithm to identifying the functional terms whose latent-space signatures reconstruct this difference. This motivates the use of sparsity-promoting regression to extract only the dominant contributors.

To accomplish this, we first evaluate the compressed reassembly of every candidate functional in the original library by pushing each operator through the encoder and computing its corresponding action in latent space. This yields a transformed library whose columns represent how much each candidate operator influences the compressed coordinates. With this latent-space library in hand, we then apply a sparsity-promoting SINDy regression to determine the minimal combination of candidate operators whose compressed signatures reconstruct the observed difference dynamics. The resulting sparse coefficient vector identifies the specific physical functionals that together constitute $L'$, thereby completing the discrepancy modeling.

\subsection{Advancing in the compressed space}

For certain state spaces, the limitations of the above algorithm become apparent. First, computing the compressed response of every perturbation functional defined on the original state space can be computationally demanding, especially when the functional library is large or when high-dimensional fields must be encoded repeatedly. Second, because the algorithm relies on derivatives obtained via finite-difference or interpolation schemes, the resulting numerical artifacts may be amplified during compression. This noise propagation can corrupt the latent-space representations and ultimately prevent the sparse regression from converging to a stable solution. These issues highlight the need for alternative formulations that avoid noise accumulation and reduce the computational burden associated with evaluating the full scale operator library.

To address the limitations above, we propose a second SINDy-based algorithm that avoids operating on the difference matrix formed from compressed reassemblies. Instead, this method constructs the target regression signal directly from timestep-to-timestep differences in the latent space. By focusing on the temporal increments of the trajectory, the algorithm bypasses the need to repeatedly evaluate and compress the full operator library, thereby reducing both computational cost and sensitivity to derivative estimates and noise.

Moreover, rather than applying each candidate functional to the entire original state space, we restrict the perturbation evaluations to local neighborhoods surrounding the sensor locations. Although these localized perturbations are not sufficient to reproduce the exact latent-space temporal differences, they provide informative proxies whose aggregated behavior reflects the underlying physical corrections present in the true dynamics. In practice, these localized operator responses capture the dominant directional influences that DA–SHRED extracts with its time-delay embedding, and thus serve as meaningful surrogate signatures for sparse regression. By leveraging these localized proxies instead of full-field operator projections, the method substantially reduces computational complexity while still enabling the recovery of the functionals comprising $L'$ in the missing physics.

\begin{algorithm}[t]
\caption{Compressed advancing SINDy}\label{alg:cap}
\begin{flushleft}
 \textbf{Input:}

 A simulation model $\dot{\textbf u} = N(\textbf u,\textbf x,t)$
 
A multivariate time-series of simulation state space $\{\textbf u_i\}_{i=1}^{m}$ and corresponding sensor measurements of the system $\{\textbf s_i\}_{i=1}^{m}$

A multivariate time-series of real state space $\{\textbf u_i^\prime\}_{i=1}^{m}$ and corresponding sensor measurements of the system $\{\textbf s_i^\prime\}_{i=1}^{m}$
 
A DA-SHRED architecture $\mathcal{G}(\{\textbf s_i^\prime\}_{i=t-k}^{t}; \textbf W_R) = \mathcal{T}(\mathcal{G}(\{\textbf s_i\}_{i=t-k}^{t}; \textbf W_S); \textbf W_T)$ and a shallow decoder $\mathcal{F}$ for restoration from the compressed latent space

A library of functional candidates $\boldsymbol\Theta = \{\theta_j\}_{j=1}^n$ \\

\textbf{Algorithm:} 

At time step t, perform the following - 

Generate $\tilde u_{t}^\prime = \mathcal{F} (\mathcal{G}(\{\textbf s_i^\prime\}_{i=t-k}^{t}; \textbf W_R))$ of the current restored real state space 

Compute $\tilde u_{t,j}^\prime = \tilde u_{t,j}^\prime + \sum _{i=1}^m \theta_j(\tilde u_{t,j}^\prime,\textbf s_i,t)dt$ of possible local perturbations at sensor locations

Acquire proxies in the latent space using the trained SHRED model $G_t(\dot{\textbf u}) = (\mathcal{G}(\{\textbf s_i^\prime\}_{i=t+1-k}^{t+1}; \textbf W_R) + \mathcal{G}(\{\textbf s_i^\prime\}_{i=t-k}^{t}; \textbf W_R)$ and $H_t(\boldsymbol\Theta) = \mathcal{G}(\{\tilde u_{t,j}^\prime\}_{j=1}^n;  \textbf W_S) - \mathcal{G}(\tilde u_{t}^\prime; \textbf W_S)$

Optimize $\xi \in argmin_{\xi} \frac{1}{2} ||G_t(\dot{\textbf u}) -H_t(\boldsymbol\Theta) \xi||_2^2 + K(\xi)$, where $K(\xi) = ||\xi||_1$ is a regularizer chosen to promote sparsity in $\xi$. It can be solved by sparsity-promoting SINDy method

\end{flushleft}
\end{algorithm}

The results of the recovery algorithms are already presented in Figs.~\ref{fig:2DKS}--\ref{fig:rde}.  Specifically, each of these figures shows the true $L'$ versus the discovered terms of $L'$ from DA-SHRED.  Figure~\ref{fig:discrepancy} also shows the trade-offs of the DA-SHRED algorithm between RMSE and sparsity.  Focusing on RMSE can lead to additional terms being found in the sparse regression framework.  Alternatively, the focus on sparsity tends to recover the correct missing physics terms while producing a higher RMSE.

\section{Conclusion and Discussions}

In this work, we introduced the Data Assimilation with Shallow Recurrent Decoder (DA-SHRED) framework, a hybrid machine learning and physics-informed architecture designed to bridge the simulation-to-reality (SIM2REAL) gap in dynamic systems. By leveraging a reduced-order latent space learned from simulation data and deploying it with sparse sensor measurements on the real physical system, DA-SHRED achieves accurate state reconstruction and identifies missing physics in complex, nonlinear systems. The framework extends the foundational SHRED model by incorporating data assimilation principles and embedding a sparse identification of nonlinear dynamics (SINDy) regression within the latent space to extract parsimonious model representation of the discrepancy between modeled and observed dynamics.

Through a series of challenging test cases - including the 2D damped Kuramoto–Sivashinsky equation, 2D Kolmogorov flow, Gray-Scott reaction–diffusion system, and rotating detonation engine (RDEs) - we demonstrated that DA-SHRED model can robustly reconstruct unobserved system states and capture perturbative and damping effects that are absent from the original simulation models. The framework exhibits rapid convergence once sufficient temporal information is gathered, even under sparse sensing constraints. Notably, in the RDE example, the DA-SHRED model captured broadening behavior of the detonation front induced by injector dynamics, emphasizing the framework’s ability to reconcile simplified computational models with real-world complexity.

Theoretically, we established that DA-SHRED preserves the representational structure of the underlying system under suitable assumptions. For linear systems, the closure of the SIM2REAL gap can be shown analytically, while for nonlinear systems, convergence needs to be validated computationally. Moreover, in the supplemental material, we provide further connection to port-Hamiltonian systems which gives a rigorous example for understanding eigenspace preservation under perturbations, supporting the assumption that the latent basis trained on simulation data remains valid when applied to perturbed real physics.

Several limitations and potential avenues for future work naturally emerge from this work. The current DA–SHRED formulation relies on the assumption that the simulation and real systems share a sufficiently similar latent basis. When significant modal shifts occur - due to nonstationary dynamics, regime changes, or structural mismatches - the assimilation accuracy may degrade, which in turn complicates the reliable discovery of missing functionals. To address this, incorporating adaptive basis updates or transfer-learning mechanisms can potentially allow the latent representation to adjust dynamically to new regimes, while preserving consistency with previously learned structures.

An adaptive SHRED framework can extend beyond merely assimilating data into a fixed latent space, enabling the model to navigate changing conditions autonomously. For instance, in settings involving nonstationary dynamics, changing boundaries, or heterogeneous data quality, new latent embeddings could be constrained to remain close to historical structures unless the data provides strong evidence of a regime shift. Mechanistically, this could be achieved using multi-timescale updates, such as a fast–slow clock architecture with coupled LSTMs or other recurrent networks, allowing rapid adjustments to transient deviations while maintaining stability over longer-term trends. Such an approach could enhance robustness to evolving dynamics, improve the fidelity of missing physics discovery, and expand the applicability of DA–SHRED to more complex, real-world systems.

Additionally, multi-scale systems which exhibit dynamics on widely separated temporal or spatial scales are particularly challenging to model in the current framework. Fast, high-frequency fluctuations may be underrepresented in latent embeddings that primarily capture dominant, slow modes. As a result, interactions between slow and fast processes - and the missing physics associated with them - can be difficult to accurately identify and assimilate. Addressing these challenges requires methods capable of resolving multiple scales simultaneously, such as scale-aware functional libraries or multi-timescale latent updates. Extensions of SINDy designed for  multiscale and noisy data settings~\cite{champion2019discovery, fasel2022ensemble} provide a valuable starting point, but their principles need to be leveraged within the SHRED architecture to address similar problems. Stochastic systems also pose significant challenges because the underlying deterministic dynamics can be obscured. In the current framework, latent embeddings and discrepancy modeling may inadvertently capture noise as spurious dynamics and fail to resolve subtle stochastic effects. Prior work on SINDy-type regressions~\cite{boninsegna2018sparse} provides guidance by demonstrating approaches for identifying both drift and diffusion terms in stochastic differential equations, but assimilation efforts on stochastic systems remain fundamentally challenging and may require libraries of candidate functions that explicitly include stochastic terms.

Finally, as recent developments in physics-informed and PDE-aware deep learning architectures have similarly aimed to merge data-driven inference with governing physical constraints~\cite{raissi2019physics, geneva2020modeling}, data assimilation for PDEs with nonlocal constraints still presents significant challenges. Unlike local PDE systems, where updates depend primarily on nearby states, nonlocal terms - such as integral or fractional operators - introduce global dependencies that complicate the SHRED architecture. This global coupling makes traditional assimilation techniques less effective, as small perturbations can propagate and subject to constrains nonlocally, leading to instability or slow convergence. Addressing these issues may require specialized reformulations, such as hierarchical or multi-resolution representations to separate local and global effects, or latent space embeddings that explicitly capture nonlocal interactions. Such strategies might enable DA-SHRED framework to handle a broader class of PDEs with complex, nonlocal dynamics.

Overall, DA-SHRED demonstrates a powerful synergy between data assimilation, reduced-order modeling, and physics-informed model discovery. By unifying temporal encoding, sparse sensing, and interpretable discrepancy modeling within a single architecture, it offers a promising pathway toward real-time, data-efficient closure of the SIM2REAL gap in scientific and engineering applications.

\section*{Code and Data}  All code and data can be found at:  https://github.com/Capella22/DA-SHRED

\section*{Acknowledgements}

This work was supported in part by the US National Science Foundation (NSF) AI Institute for Dynamical Systems (dynamicsai.org), grant 2112085. JNK further acknowledges support from the Air Force Office of Scientific Research  (FA9550-24-1-0141) and from the AFOSR/AFRL Center of Excellence in Assimilation of Flow Features in Compressible Reacting Flows under award number FA9550-25-1-0011, monitored by Dr. Chiping Li and Dr. Ramakanth Munipalli.

\printbibliography{}

\newpage
\appendix

\section*{\centering Supplemental Material}

\section{Preservation of the eigenspace}

In the formulation of our model, we made a bold assumption that provided we have enough sensors, the real physics in the latent space could be well-represented by the same eigenspace we have for the simulation model. Such behavior cannot be easily shown, as introduced in Section 2.1, and may not even hold for general dynamical systems. However, for a certain type of dynamical system and damping models that are port-Hamiltonian, eigenspaces can be deliberately designed to be preserved, which allow us to use the same basis for simulation and perturbed reality\cite{van2006port}.

\subsection{A simple case - damped pendulum}

In this example we use a damped pendulum as our real physics to be studied, which is given by:

\[
\begin{aligned}
    \dot{\omega} &= -\frac{g}{l} \sin(\theta) - \frac{d}{ml^2} \omega + \frac{u}{ml^2} 
    \\
    \dot{\theta} &= \omega
\end{aligned}
\]

where $\theta$ is the pendulum angle, $\omega$ is the angular momentum, d is the damping coefficient, and u is external input.

Therefore, the Hamiltonian function, representing the total energy, is given by
\[
H(\theta, \omega) = \frac{1}{2} ml^2 \omega^2 + mgl(1 - \cos \theta).
\]

And in the port-Hamiltonian architecture, the system can be reformulated as:

\[
\begin{bmatrix} \dot{\theta} \\ \dot{\omega} \end{bmatrix} =
\left( \begin{bmatrix} 0 & 1 \\ -1 & 0 \end{bmatrix}
- \begin{bmatrix} 0 & 0 \\ 0 & \frac{d}{ml^2} \end{bmatrix} \right)
\begin{bmatrix} \frac{\partial H}{\partial \theta} \\ \frac{\partial H}{\partial \omega} \end{bmatrix}
+ \begin{bmatrix} 0 \\ 1 \end{bmatrix} u.
\]

where
\begin{itemize}
    \item $J = \begin{bmatrix} 0 & 1 \\ -1 & 0 \end{bmatrix}$ is skew-symmetric.
    \item $R = \begin{bmatrix} 0 & 0 \\ 0 & \frac{d}{ml^2} \end{bmatrix}$ imports dissipation through damping.
\end{itemize}

Therefore, with the simulation model corresponds to a simple pendulum while the underlying physics reflect a damped pendulum, the eigenspace structure of the state space remains preserved. Consequently, one can expect the deployment on this eigenspace to satisfy the same favorable criteria, yielding accurate matches even with a limited number of sensors, as illustrated in Figure~\ref{fig:pen}. The DA-SHRED model effectively bridges the discrepancy, despite the presence of a moderate $5\%$ noise level in the input data.

\begin{figure}[H]
    \begin{minipage}{0.95\linewidth}
  \begin{subfigure}{1.1\linewidth}
    \begin{overpic}[width=\linewidth]{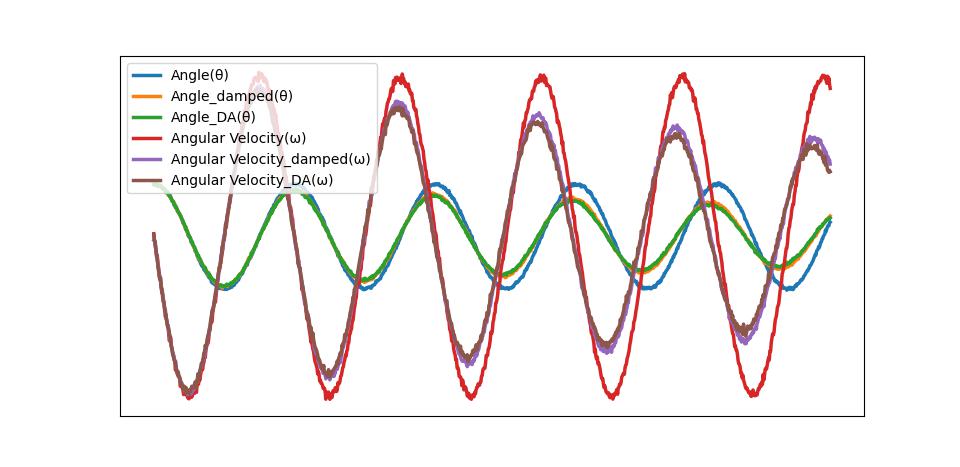}
    \end{overpic}
    \caption*{}\label{subfig:key-a}
  \end{subfigure}
  \end{minipage}
  
  \caption{The DA-SHRED restoration of damped pendulum dynamics. The angular displacement and angular velocity are presented for three cases: the simulated simple-pendulum model, the unknown true dynamics (damped pendulum), and the DA-SHRED reconstruction.\label{fig:pen}}
\end{figure}

\subsection{Detailed setup and notation}

Let $\mathcal{X}\subseteq\mathbb{R}^n$ be open and simply connected. A finite-dimensional port-Hamiltonian system (PHS) on $\mathcal{X}$ with inputs $u\in\mathbb{R}^m$ is written in coordinates as
\begin{equation}\label{eq:phs_basic}
\dot{x} = \big(J(x) - R(x)\big)\nabla H(x) + G(x)\,u,\qquad
y = G(x)^\top \nabla H(x),
\end{equation}
where:
\begin{itemize}
  \item $H\in C^2(\mathcal{X})$ is the Hamiltonian (energy) function,
  \item $J(x)\in\mathbb{R}^{n\times n}$ is skew-symmetric: $J(x)^\top = -J(x)$ (structure/interconnection),
  \item $R(x)\in\mathbb{R}^{n\times n}$ is symmetric positive semidefinite: $R(x)^\top = R(x) \succeq 0$ (dissipation),
  \item $G(x)\in\mathbb{R}^{n\times m}$ is the input (port) map.
\end{itemize}
We write $\mathrm{d}H(x)$ or $\nabla H(x)$ interchangeably for the exterior derivative / gradient.

In the next few sections, we show that under natural perturbations the perturbed dynamics can again be written in the form \eqref{eq:phs_basic} (possibly with modified $\widetilde H,\widetilde J,\widetilde R,\widetilde G$), i.e. the PHS ``base'' (the algebraic/skew/symmetric structure and the Dirac-type interconnection) persists.

\subsection{Differential-forms perspective and the integrability condition}

We regard $\nabla H$ as the exact 1-form $\mathrm{d}H$. Suppose a perturbation of the gradient-term appears as an additive 1-form $\delta(x)\in\Omega^1(\mathcal{X})$ so that the perturbed dynamics read
\begin{equation}\label{eq:pert_grad}
\dot{x} = (J-R)\big(\nabla H + \delta\big) + G\,u.
\end{equation}

\begin{prop}[Exactness / Poincaré lemma]\label{prop:poincare}
If $\mathcal{X}$ is simply connected and $\delta$ is a $C^1$ 1-form on $\mathcal{X}$ with exterior derivative $d\delta=0$ (i.e. $\delta$ is closed), then there exists a scalar function $\varepsilon\in C^2(\mathcal{X})$ such that $\delta = \mathrm{d}\varepsilon$. Consequently, defining $\widetilde H := H + \varepsilon$ yields
\[
\nabla \widetilde H = \nabla H + \delta,
\]
and the perturbed system \eqref{eq:pert_grad} is of port-Hamiltonian form with the same $J,R,G$ and Hamiltonian $\widetilde H$.
\end{prop}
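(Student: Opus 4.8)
The statement has a classical analytic core --- the Poincar\'e lemma on a simply connected domain --- followed by a one-line substitution into \eqref{eq:pert_grad}. The plan is to (i) construct the primitive $\varepsilon$ by path integration, (ii) verify $\mathrm{d}\varepsilon=\delta$ together with the claimed $C^2$ regularity, and (iii) substitute to read off the port-Hamiltonian form.

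For step (i) I would fix a basepoint $x_0$ in each connected component of $\mathcal{X}$ and set $\varepsilon(x):=\int_{\gamma}\delta$ over any piecewise-$C^1$ path $\gamma\subset\mathcal{X}$ from $x_0$ to $x$. The crux is \emph{path independence}: given two such paths $\gamma_0,\gamma_1$, simple connectedness supplies a homotopy $h\colon[0,1]^2\to\mathcal{X}$ between them rel endpoints; pulling $\delta$ back through $h$ and applying Green's/Stokes' theorem on the unit square gives $\int_{\gamma_0}\delta-\int_{\gamma_1}\delta=\iint_{[0,1]^2}h^\ast(\mathrm{d}\delta)=0$, since $\mathrm{d}\delta=0$ by hypothesis. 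Hence $\varepsilon$ is well defined.

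For step (ii), writing $\delta=\sum_i\delta_i\,\mathrm{d}x^i$ in coordinates, the claim $\mathrm{d}\varepsilon=\delta$ reduces to $\partial_i\varepsilon=\delta_i$, which I would verify locally: extend a fixed path to $x$ by the short segment $s\mapsto x+s\,e_i$ so that $\varepsilon(x+s e_i)=\varepsilon(x)+\int_0^s\delta_i(x+t e_i)\,\mathrm{d}t$, and differentiate at $s=0$ using continuity of $\delta_i$. Then $\partial_i\varepsilon=\delta_i$ is continuous, so $\varepsilon\in C^1$; since $\delta$ is $C^1$, $\partial_i\varepsilon=\delta_i\in C^1$ and hence $\varepsilon\in C^2(\mathcal{X})$, as claimed. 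Putting $\widetilde H:=H+\varepsilon$ gives $\nabla\widetilde H=\nabla H+\delta$.

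For step (iii), substituting $\nabla H+\delta=\nabla\widetilde H$ into \eqref{eq:pert_grad} yields $\dot x=(J-R)\nabla\widetilde H+G\,u$ with the \emph{same} structure matrices $J$ (skew-symmetric) and $R$ (symmetric, $\succeq 0$) and the same port map $G$ as in \eqref{eq:phs_basic}, and with $\widetilde H\in C^2(\mathcal{X})$; every port-Hamiltonian axiom therefore holds verbatim, and the collocated output is $y=G^\top\nabla\widetilde H$. The only genuine obstacle is the homotopy-invariance step in (i): a merely continuous homotopy $h$ need not be $C^1$, so to apply Stokes' theorem legitimately one must either smooth $h$ (Whitney approximation; small perturbations of $h$ remain inside the open set $\mathcal{X}$) or replace the global argument by a finite subdivision into small balls on which the star-shaped Poincar\'e lemma applies, patched together via a monodromy argument carried out entirely with $C^1$ data. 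Everything else is routine; alternatively one may simply invoke the Poincar\'e lemma for simply connected open subsets of $\mathbb{R}^n$ as a standard fact and pass directly to the substitution in (iii).
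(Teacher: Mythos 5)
Your proposal is correct and follows essentially the same route as the paper: the paper's proof simply invokes the Poincar\'e lemma (closed $\Rightarrow$ exact on a simply connected domain), sets $\widetilde H = H+\varepsilon$, and substitutes into \eqref{eq:pert_grad} to recover the PHS form with unchanged $J,R,G$. The only difference is that you additionally prove the Poincar\'e lemma from scratch (path integration, homotopy invariance via Stokes, the $C^2$ regularity check), whereas the paper treats it as a standard fact --- a legitimate, if more detailed, version of the same argument, as you yourself note in your closing sentence.
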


\begin{proof}
This is a direct application of the Poincaré lemma: on a simply connected domain every closed 1-form is exact. If $d\delta = 0$ then there exists $\varepsilon$ with $\mathrm{d}\varepsilon=\delta$. Set $\widetilde H=H+\varepsilon$. Substitute into \eqref{eq:pert_grad} to obtain $\dot x=(J-R)\nabla\widetilde H+G u$, which has the PHS form with unchanged $J,R,G$.
\end{proof}

\begin{remark}
The condition $d\delta=0$ is necessary for a global scalar potential $\varepsilon$ to exist. If $\delta$ fails to be closed globally, one can still find a local potential on any contractible subdomain; thus the PHS form persists \emph{locally} but not necessarily globally.
\end{remark}

\subsection{Algebraic persistence under matrix perturbations}

Now consider perturbations to the structure/dissipation/port matrices. Replace $(J,R,G)$ by $(J+\Delta J,\,R+\Delta R,\,G+\Delta G)$. The perturbed dynamics become
\begin{equation}\label{eq:full_pert}
\dot x = \big((J+\Delta J)-(R+\Delta R)\big)\nabla H + (G+\Delta G)u + (J-R)\delta,
\end{equation}
where the extra term $(J-R)\delta$ arises if we also perturb the gradient as above; we will handle that by absorbing $\delta$ into a modified Hamiltonian when possible.

The algebraic constraints that define the PHS class are:
\[
J^\top = -J,\qquad R^\top = R \succeq 0.
\]
These constraints are preserved under small perturbations provided the perturbations respect skew/symmetry and positive semidefiniteness.

\begin{prop}[Persistence under admissible matrix perturbations]\label{prop:matrix}
Let $K\subset\mathcal{X}$ be compact. Suppose $\Delta J(x)$ is skew-symmetric for all $x\in K$, $\Delta R(x)$ is symmetric for all $x\in K$ and small in operator norm so that $R(x)+\Delta R(x)\succeq 0$ for all $x\in K$, and $\Delta G(x)$ is small. Then the perturbed matrices
\[
\widetilde J := J+\Delta J,\qquad \widetilde R := R+\Delta R,\qquad \widetilde G := G+\Delta G
\]
satisfy $\widetilde J^\top = -\widetilde J$ and $\widetilde R^\top = \widetilde R \succeq 0$ on $K$, hence the algebraic PHS constraints remain valid on $K$.
\end{prop}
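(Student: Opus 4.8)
The plan is to verify the three algebraic constraints pointwise at a fixed $x\in K$ and then use compactness of $K$ to make the smallness hypotheses uniform. The constraints on $\widetilde J$ and $\widetilde R$ are either linear subspace conditions (skew-symmetry, symmetry) or a convex-cone condition (positive semidefiniteness), each of which is preserved under addition of a perturbation of the matching type. The proof therefore reduces to elementary transpose manipulations plus one eigenvalue estimate, rather than any substantive analysis.

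First I would fix an arbitrary $x\in K$ and check skew-symmetry of $\widetilde J(x)$. Taking transposes and using that the base interconnection matrix satisfies $J(x)^\top=-J(x)$ from \eqref{eq:phs_basic}, together with the assumed $\Delta J(x)^\top=-\Delta J(x)$, one obtains
\[
\widetilde J(x)^\top = \big(J(x)+\Delta J(x)\big)^\top = J(x)^\top + \Delta J(x)^\top = -J(x)-\Delta J(x) = -\widetilde J(x).
\]
Symmetry of $\widetilde R(x)$ follows identically: $\widetilde R(x)^\top = R(x)^\top+\Delta R(x)^\top = R(x)+\Delta R(x)=\widetilde R(x)$, using $R(x)^\top=R(x)$ from \eqref{eq:phs_basic} and the assumed symmetry of $\Delta R(x)$. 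No constraint need be checked for $\widetilde G=G+\Delta G$, since the port map carries no algebraic restriction beyond being an $n\times m$ matrix field; the smallness of $\Delta G$ serves only to keep the perturbed input channel well-behaved and plays no role in the conclusion.

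The positive semidefiniteness of $\widetilde R(x)$ is the one clause deserving care. As worded, it is built into the hypotheses (the perturbation is assumed small enough that $R+\Delta R\succeq 0$), so it transfers directly to the conclusion. To make the smallness quantitative I would invoke Weyl's eigenvalue inequality: for symmetric matrices $\lambda_{\min}\big(R(x)+\Delta R(x)\big)\ge \lambda_{\min}(R(x))-\|\Delta R(x)\|$, where $\|\cdot\|$ is the operator norm. Since the matrix fields are continuous and $K$ is compact, $\sup_{x\in K}\|\Delta R(x)\|$ is attained and finite, so a single uniform smallness threshold suffices across all of $K$; this is precisely where compactness enters the argument.

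The main obstacle is not a hard estimate but a genuinely delicate point: operator-norm smallness of $\Delta R$ does \emph{not} by itself guarantee $R+\Delta R\succeq 0$ when $R$ is merely positive semidefinite, i.e.\ when $R(x)$ has a nontrivial kernel (as in the damped-pendulum example, where $R$ has a zero eigenvalue). If $\Delta R$ has a negative component along $\ker R(x)$, then $\lambda_{\min}(R(x))=0$ and the Weyl bound only yields $\lambda_{\min}(\widetilde R(x))\ge -\|\Delta R(x)\|$, which may be negative. Hence the clean quantitative argument closes only under the stronger assumption $\inf_{x\in K}\lambda_{\min}(R(x))>0$ (uniform positive definiteness), for which any threshold $\|\Delta R(x)\|<\inf_{x\in K}\lambda_{\min}(R(x))$ forces $\widetilde R\succeq 0$; in the general semidefinite case one must either retain positivity of $\widetilde R$ as a direct hypothesis, as the proposition does, or impose that $\Delta R$ restrict to a positive-semidefinite perturbation on $\ker R(x)$. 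Once positivity is secured by either route, all three constraints hold at $x$, and since $x\in K$ was arbitrary the algebraic PHS structure persists on all of $K$, completing the argument.
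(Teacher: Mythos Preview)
Your proof is correct and follows essentially the same route as the paper: linearity of the transpose for the skew-symmetry and symmetry claims, and Weyl's eigenvalue inequality together with compactness for the semidefiniteness of $\widetilde R$. Your discussion of the kernel case is actually more careful than the paper's, which handles it only implicitly via the parenthetical ``or ensuring no negative eigenvalue crosses zero.''
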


\begin{proof}
Immediate from linearity of transpose and continuity of eigenvalues (Weyl inequalities). A sum of skew-symmetric matrices is skew-symmetric. For symmetric $R$ and symmetric $\Delta R$, eigenvalues vary continuously with the perturbation; choosing $\|\Delta R\|$ less than the smallest strictly positive gap of $R$ on $K$ (or ensuring no negative eigenvalue crosses zero) guarantees $\widetilde R\succeq 0$.
\end{proof}

\begin{thm}[Persistence of PHS form under natural perturbations]\label{thm:main}
Let \eqref{eq:phs_basic} be a PHS on a simply connected domain $\mathcal{X}$. Consider perturbations $\Delta J,\Delta R,\Delta G$ (matrix fields) and a 1-form perturbation $\delta$ so that the perturbed dynamics are
\[
\dot x = \big((J+\Delta J)-(R+\Delta R)\big)\big(\nabla H + \delta\big) + (G+\Delta G)u.
\]
Assume:
\begin{enumerate}
  \item $\Delta J$ is skew-symmetric on $\mathcal{X}$,
  \item $\Delta R$ is symmetric and small enough on compact subsets to keep $R+\Delta R\succeq 0$,
  \item $\delta$ is closed: $d\delta = 0$ on $\mathcal{X}$.
\end{enumerate}
Then there exists $\widetilde H = H + \varepsilon$ with $\mathrm{d}\varepsilon=\delta$ and
\[
\widetilde J := J+\Delta J,\qquad \widetilde R := R+\Delta R,\qquad \widetilde G := G+\Delta G
\]
such that the perturbed system can be written globally in PHS form
\[
\dot x = (\widetilde J - \widetilde R)\nabla \widetilde H + \widetilde G\, u.
\]
\end{thm}

\begin{proof}
By the Poincaré lemma (closed $\Rightarrow$ exact on simply connected $\mathcal{X}$) there exists $\varepsilon$ with $\mathrm{d}\varepsilon=\delta$. Set $\widetilde H=H+\varepsilon$. Substitute into the perturbed dynamics and collect matrix perturbations into $\widetilde J,\widetilde R,\widetilde G$. By Proposition \ref{prop:matrix} the algebraic PHS constraints hold for the perturbed matrices on compact subsets where $\Delta R$ is small, hence the system is again PHS with $(\widetilde J,\widetilde R,\widetilde H,\widetilde G)$.
\end{proof}

\begin{remark}
If $\delta$ is not closed globally, one may still obtain a local potential and hence a local PHS representation on contractible subsets. If $\delta$ is not expressible as an exact 1-form but can be decomposed as
\[
\delta = \delta_{\mathrm{grad}} + \delta_{\mathrm{ng}},
\]
where $\delta_{\mathrm{grad}}$ is exact and $\delta_{\mathrm{ng}}$ is of the form $(\Delta J - \Delta R)\nabla\widehat H$ for some $\Delta J=-\Delta J^\top$, $\Delta R=\Delta R^\top\succeq0$ and some $\widehat H$, then one can absorb $\delta_{\mathrm{grad}}$ into a modified Hamiltonian and interpret $\delta_{\mathrm{ng}}$ as further matrix perturbations. Such decompositions are problem-dependent and often possible for physically motivated perturbations.
\end{remark}

\subsection{More examples}

\subsubsection{Example 1: Mass--spring--damper (one degree of freedom)}

State $x = \begin{bmatrix} q \\ p \end{bmatrix}$ with $q$ position and $p$ momentum. Hamiltonian
\[
H(q,p) = \frac{p^2}{2m} + \frac{k q^2}{2},
\]
with $m>0$ mass and $k>0$ stiffness. Choose
\[
J = \begin{bmatrix} 0 & 1 \\ -1 & 0 \end{bmatrix},\qquad
R = \begin{bmatrix} 0 & 0 \\ 0 & c \end{bmatrix},\qquad
G=\begin{bmatrix}0\\1\end{bmatrix},
\]
where $c\ge 0$ is viscous damping and $u$ is an external force. The PHS dynamics are
\[
\dot x = (J-R)\nabla H + G u
= \begin{bmatrix} 0 & 1 \\ -1 & -c \end{bmatrix}
\begin{bmatrix} k q \\ \dfrac{p}{m} \end{bmatrix}
+ \begin{bmatrix}0\\1\end{bmatrix}u.
\]

Now perturb the mass $m\mapsto m+\Delta m$ with small $\Delta m$. The new Hamiltonian is
\[
\widetilde H(q,p)=\frac{p^2}{2(m+\Delta m)} + \frac{k q^2}{2}.
\]
Hence the perturbation to the dynamics modifies only $\nabla H$ (an exact perturbation), and by Proposition \ref{prop:poincare} we may absorb this into $\widetilde H$ while leaving $J,R,G$ unchanged. Thus the system remains PHS in the same base.

If instead we perturb the damping $c\mapsto c+\Delta c$, this changes $R$ to $R+\Delta R$ with
\[
\Delta R = \begin{bmatrix}0&0\\0&\Delta c\end{bmatrix},
\]
which is symmetric and (for $\Delta c\ge -c$) preserves positive semidefiniteness; hence PHS algebraic constraints persist (Proposition \ref{prop:matrix}).

\subsubsection{Example 2: Series RLC circuit}

Consider a series RLC circuit with state variables
\[
x = \begin{bmatrix} \phi \\ q \end{bmatrix},
\]
where $\phi$ is the flux linkage in the inductor and $q$ is the capacitor charge. Energies:
\[
H(\phi,q) = \frac{\phi^2}{2L} + \frac{q^2}{2C},
\]
with inductance $L>0$ and capacitance $C>0$. Take
\[
J = \begin{bmatrix} 0 & -1 \\ 1 & 0 \end{bmatrix},\qquad
R = \begin{bmatrix} r & 0 \\ 0 & 0 \end{bmatrix},\qquad
G=\begin{bmatrix}1\\0\end{bmatrix},
\]
where $r\ge 0$ is the resistance and the input $u$ is a voltage source. The PHS equations are
\[
\dot x = (J-R)\nabla H + G u
= \begin{bmatrix} 0 & -1 \\ 1 & -r \end{bmatrix}
\begin{bmatrix} \dfrac{\phi}{L} \\ \dfrac{q}{C} \end{bmatrix}
+ \begin{bmatrix}1\\0\end{bmatrix}u.
\]

Perturbing $L$ or $C$ (e.g.\ $L\mapsto L+\Delta L$) modifies $H$ to
\[
\widetilde H(\phi,q)=\frac{\phi^2}{2(L+\Delta L)}+\frac{q^2}{2C},
\]
which is an exact change to $\nabla H$ and hence absorbed into $\widetilde H$. Perturbing $r$ modifies $R$ directly in a symmetric way; small changes preserve $R\succeq 0$. Thus practical component tolerances correspond to the admissible perturbation classes and the circuit remains representable in the same PHS base.

\subsubsection{Damped, forced wave equation}

Consider the damped wave equation
\[
\frac{\partial^2 u}{\partial t^2} - c^2 \nabla^2 u + d\,\frac{\partial u}{\partial t}
= f(x,t),
\qquad x\in\Omega\subset\mathbb{R}^n.
\]

Introduce the state
\[
x = 
\begin{bmatrix}
u \\ p
\end{bmatrix},
\qquad
p = \frac{\partial u}{\partial t},
\]
where $u$ is displacement and $p$ is momentum density.  
The Hamiltonian functional (total stored energy) is
\[
H(u,p) = \int_\Omega 
\left(
\frac{p^2}{2} + \frac{c^2}{2}\|\nabla u\|^2
\right)\,dx.
\]

Define the variational gradients
\[
\nabla H =
\begin{bmatrix}
-\;c^2 \nabla^2 u \\[4pt]
p
\end{bmatrix}.
\]

Choose the port-Hamiltonian structure matrices
\[
J = 
\begin{bmatrix}
0 & 1 \\[4pt]
-1 & 0
\end{bmatrix},
\qquad
R = 
\begin{bmatrix}
0 & 0 \\[4pt]
0 & d
\end{bmatrix},
\qquad
G =
\begin{bmatrix}
0 \\[4pt]
1
\end{bmatrix},
\]
where $d\ge 0$ is the damping coefficient and $u_{\mathrm{in}}=f(x,t)$ is an external force density.

The port-Hamiltonian dynamics are
\[
\dot{x}
= (J-R)\nabla H + G\,u_{\mathrm{in}}
=
\begin{bmatrix}
0 & 1 \\[4pt]
-1 & -d
\end{bmatrix}
\begin{bmatrix}
-\;c^2 \nabla^2 u \\[4pt]
p
\end{bmatrix}
+
\begin{bmatrix}
0 \\[4pt]
1
\end{bmatrix}
f(x,t).
\]

Expanding the components gives
\[
\dot{u} = p,
\qquad
\dot{p} = c^2 \nabla^2 u - d\,p + f(x,t),
\]
which recovers the original PDE.

A perturbation in the wave speed $c\mapsto c+\Delta c$ modifies only the Hamiltonian (an exact perturbation), leaving $J,R,G$ unchanged; hence the system remains in the same port-Hamiltonian base.  
In contrast, perturbing the damping $d\mapsto d+\Delta d$ alters the dissipation matrix to
\[
R+\Delta R
=
\begin{bmatrix}
0 & 0 \\[4pt]
0 & d+\Delta d
\end{bmatrix},
\]
which preserves symmetry and, for $\Delta d\ge -d$, maintains positive semidefiniteness; therefore the port-Hamiltonian algebraic constraints are still satisfied.

\section{Experimental Details}

Here we provide further experimental details and supplemental figures for the experiments in Section 3 of the main text.

\subsection{2D damped Kuramoto–Sivashinsky equation system}

In this setting, our data simulation model is the 2D version of damped Kuramoto-Sivashinsky equation (KS equation), which is given by

\[
\bf{u}_t + \frac12 |\nabla \bf{u}|^2 + \nabla^2 \bf{u} + \nu\nabla^4 \bf{u} = 0
\]

while the real physics is a damped version of the KS equation (DKS) \cite{dehghan2019two}, given by
\begin{equation}
\bf{u}_t^\prime  + \frac12 |\nabla \bf{u}^\prime|^2 + \nabla^2 \bf{u}^\prime + \nu\nabla^4 \bf{u}^\prime + (\bf{v}\cdot \nabla)\bf{u}^\prime - \gamma \bf{u}^\prime= 0
\label{eq:2DKS_real}
\end{equation}
with damping force $(\bf{v}\cdot \nabla)\bf{u}^\prime - \gamma \bf{u}^\prime$. 

Still, for simplicity we set the fourth order parameter to be $\nu = 1$. Figure~\ref{fig:ks_2} exhibits its behavior on a 2D map of range $[0,64]\times[0,64]$.

\begin{figure}[t]
\centering
\begin{overpic}[scale=0.3]{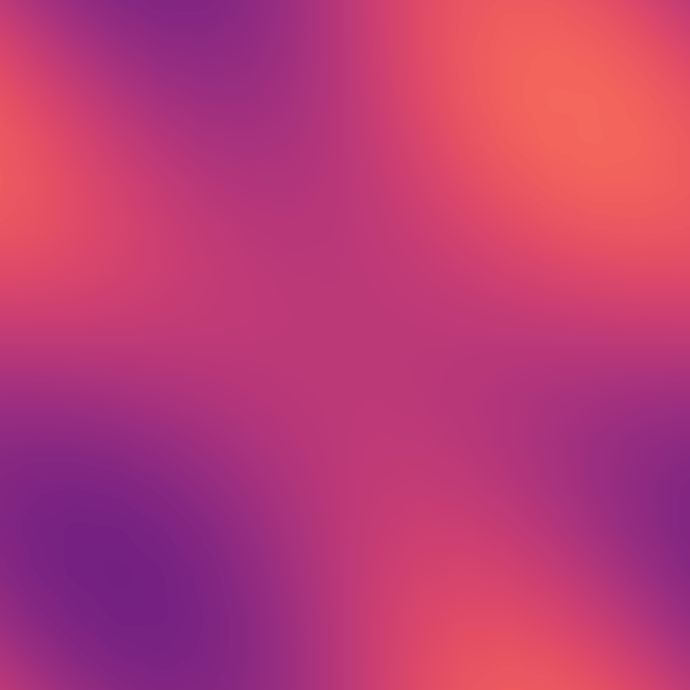}
\put(40,105){$t=0.0$}
\end{overpic}
\begin{overpic}
[scale=0.3]{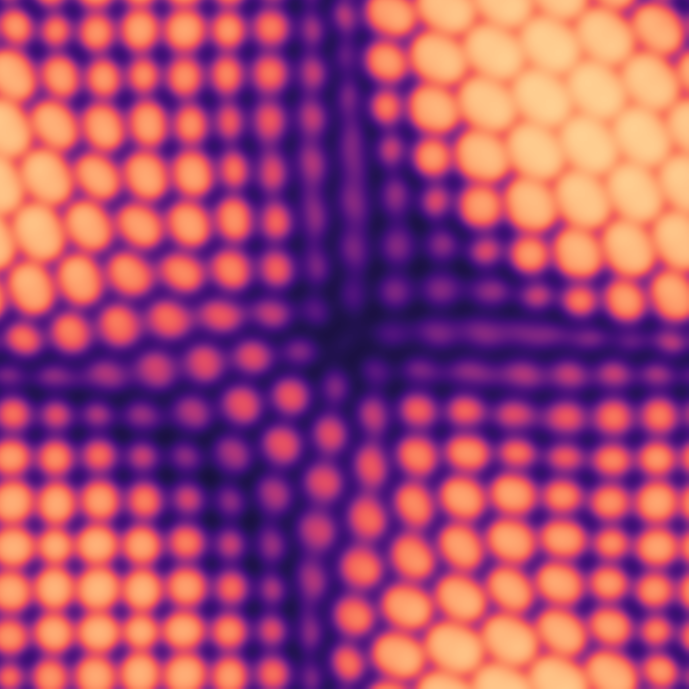}
\put(40,105){$t=5.0$}
\end{overpic}
\begin{overpic}
[scale=0.3]{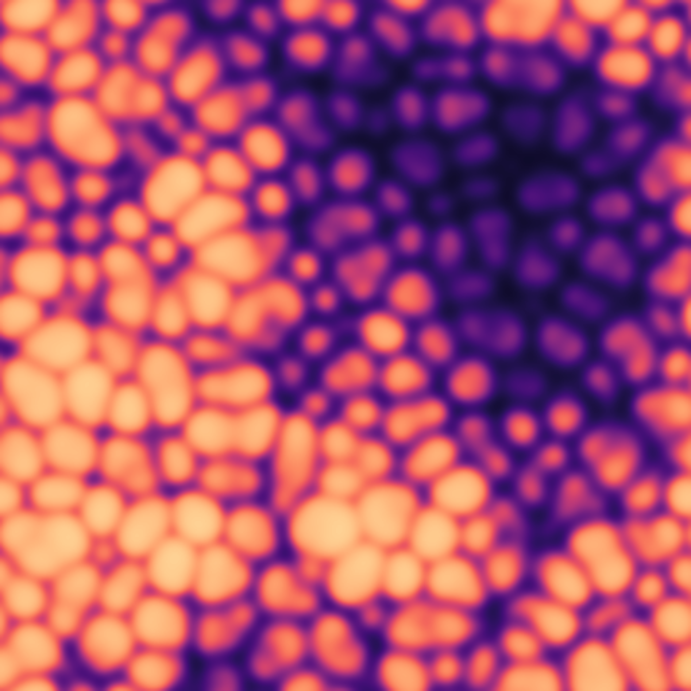}
\put(40,105){$t=10.0$}
\end{overpic}
\begin{overpic}
[scale=0.3]{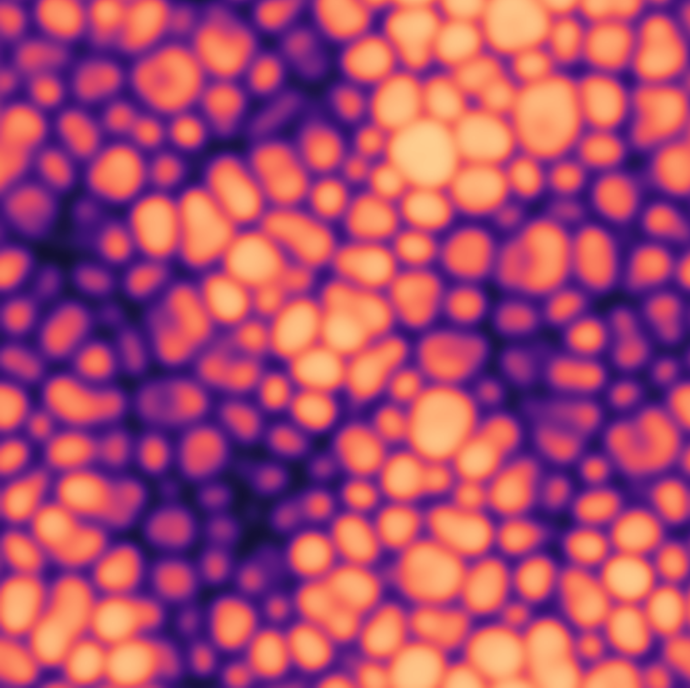}
\put(40,105){$t=15.0$}
\end{overpic}

\caption{The behavior of 2D Kuramoto-Sivashinsky equation without damping. \label{fig:ks_2}}
\end{figure}

\subsection{2D Kolmogorov flow}

Figure~\ref{fig:kf_2} shows the results of the 2D damped Kolmogorov flow in earlier time steps. In contrast to the fast convergence from simpler examples, the DA-SHRED model did struggle a bit with perturbations on the 2D Kolmogorov flow.

At the initial stages of the data assimilation process, the DA-SHRED model exhibited poor performance, as shown in Figure 9, likely due to an incomplete alignment between the model’s internal dynamics and the observed data. During this early period ($t<20$), the system’s state estimates were still heavily influenced by prior uncertainties and insufficiently informed, leading to large discrepancies between DA-SHRED restorations and ground truth. However, afterwards, the model began to stabilize and progressively converge toward the real physics, suggesting that the assimilation updates had accumulated enough information to reduce model bias and variance, allowing the DA-SHRED framework to achieve a consistent alignment between its restored and unobserved real state spaces.

\begin{figure}[H]
\centering
\begin{minipage}{0.5\linewidth}
  \begin{subfigure}{\linewidth}
    \begin{overpic}[width=\linewidth]{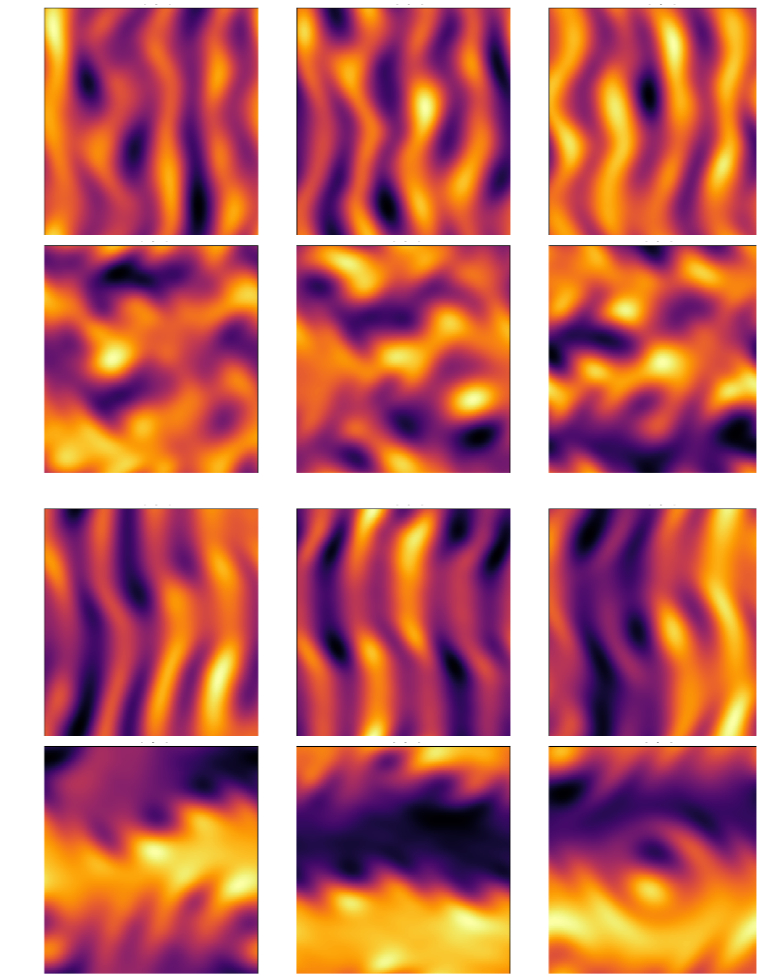}
    \put(8,101){simulation}
    \put(32,101){real physics}
    \put(57,101){DA-SHRED}
    \put(34,13){\color{white} $\bf t\!=\!20.0$}
    \put(34,64.5){\color{white} $\bf t\!=\!15.0$}
    \put(33,-3){\color{black} ${{L}} = \{(\omega)\}$}
    \put(53,-3){\color{black} ${{L}'} = \{(\omega, {\color{gray}{\omega\nabla^2 \omega,\omega^3}})\}$}
    \end{overpic}
    \caption*{}\label{subfig:key-a}
  \end{subfigure}
  \end{minipage}
\begin{minipage}{0.45\linewidth}
    \begin{subfigure}{\linewidth}
      \vstretch{1.1}{\includegraphics[width=\linewidth]{figs/RMSE_kf_p1.png}}
      \caption*{}\label{subfig:key-b}
    \end{subfigure}\hfill

    \medskip
    \begin{subfigure}{\linewidth}
      \vstretch{1.1}{\includegraphics[width=\linewidth]{figs/RMSE_kf_p2.png}}
      \caption*{}\label{subfig:key-c}
    \end{subfigure}\hfill
  \end{minipage}
\caption{The result of 2D damped Kolmogorov flow. Figures on the same row are taken at the same timestep. The left column represents the undamped simulation model (2D Kolmogorov flow without damping); the middle column represents unknown real physics (2D Kolmogorov flow with linear damping); and the right column represents state space restored by DA-SHRED.
\label{fig:kf_2}}
\end{figure}

\subsection{1D rotating detonation engines}

Figure~\ref{fig:rde_2} shows the results of the 1D rotating detonation engines in later time steps after the meet of detonation fronts.

In the one-dimensional rotating detonation engine (RDE) framework, simplified numerical models typically represent the detonation front as a sharp and well-defined discontinuity. However, in practical RDE configurations, the injector dynamics --- including spatial non-uniformities in fuel injection, local turbulence, and incomplete mixing --- introduce complex flow features that effectively broaden the detonation front. This physical broadening arises from localized variations in pressure, temperature, and equivalence ratio near the injection region, and leads to an extended reaction zone rather than an idealized, infinitesimally thin front. In the DA-SHRED experiments, this behavior was clearly observed in both figure 6 and figure 10: the detonation front exhibited a wider structure compared to the corresponding simplified 1D simulations. This observation gives further hint on the critical role of injector-induced flow inhomogeneities in determining the effective detonation structure and underscores the necessity of incorporating realistic injector and mixing physics in data assimilation frameworks for accurate RDE modeling.

\begin{figure}[H]
  \begin{minipage}{0.38\linewidth}
    \begin{subfigure}{\linewidth}
    \begin{overpic}[width=\linewidth]{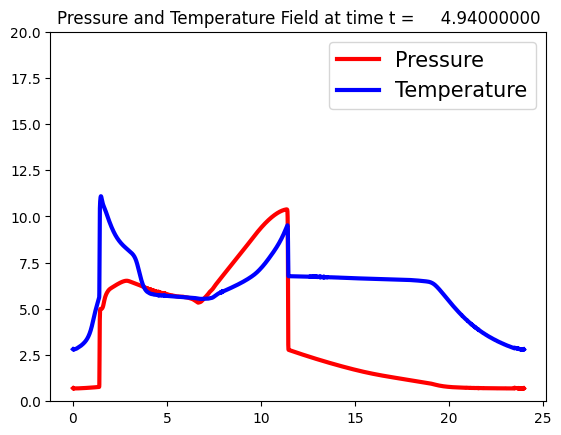}
    \put(34,81){simulation}
    \end{overpic}
      \caption*{}\label{subfig:key-b}
    \end{subfigure}\hfill
  \end{minipage}
      \begin{minipage}{0.6\linewidth}
  \begin{subfigure}{\linewidth}
    \begin{subfigure}{\linewidth}
    \begin{overpic}[width=0.5\linewidth]{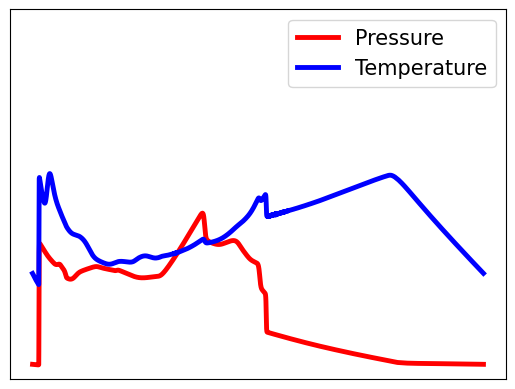}
    \put(34,79){real physics}
    \put(30,-7){\color{black} ${{L}'} = \{({\bf u},{\bf u}^3)\}$}
    \end{overpic}
    \begin{overpic}[width=0.5\linewidth]{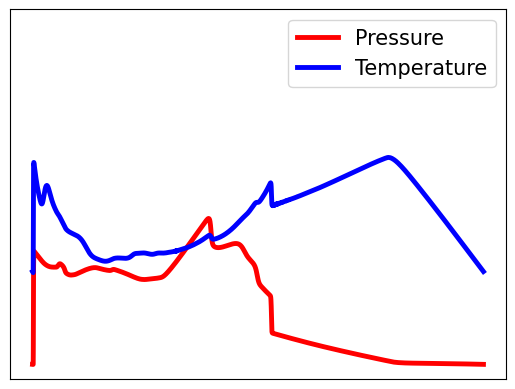}
    \put(34,79){DA-SHRED}
    \put(24,-7){\color{black} ${{L}'} = \{({\bf u},{\bf u}^3, {\color{gray}{\lambda^2 {\bf u}}})\}$}
    \end{overpic}
      \caption*{}\label{subfig:key-b}
    \end{subfigure}\hfill

    \medskip
    \begin{subfigure}{\linewidth}
      \begin{overpic}[width=0.5\linewidth]{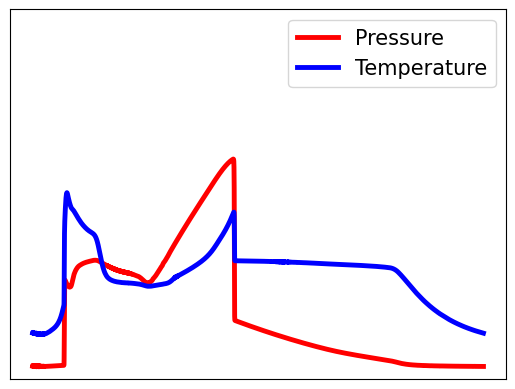}
    \put(36,-7){\color{black} ${{L}'} = \{({\bf u})\}$}
    \end{overpic}
      \begin{overpic}[width=0.5\linewidth]{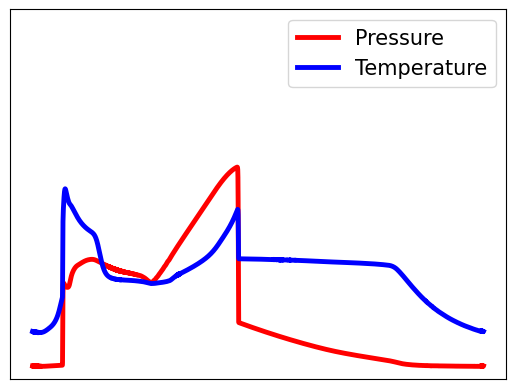}
    \put(36,-7){\color{black} ${{L}'} = \{({\bf u})\}$}
    \end{overpic}
      \caption*{}\label{subfig:key-c}
    \end{subfigure}\hfill
  \end{subfigure}
  \end{minipage}
  \caption{The DA-SHRED reconstruction of 1D damped RDE model at a later timestep beyond that displayed in Figure~\ref{fig:rde}. Similarly, one simulation
model is deployed on two different real physics settings, and the rightmost column represents the corresponding
two DA-SHRED results towards convergence.
  \label{fig:rde_2}}
\end{figure}

\end{document}